\title{Dual-Flow: Transferable Multi-Target, Instance-Agnostic Attacks via \textit{In-the-wild} Cascading Flow Optimization}
\author{%
  Yixiao Chen$^{1,*}$, ~~Shikun Sun$^{1,}$\thanks{Equal contribution; the order of co-first authors is interchangeable.}, ~~Jianshu Li$^2$, 
    ~~Ruoyu Li$^2$, ~~Zhe Li$^2$, ~~Junliang Xing$^1$ \\
  $^1$Tsinghua University, ~~$^2$Ant Group\\
  \texttt{ \{chenyixi22, ssk21\}@mails.tsinghua.edu.cn},\\ 
~ \texttt{ \{jianshu.l, ruoyu.li, lizhe.lz\}@antgroup.com}, ~ \texttt{ jlxing@tsinghua.edu.cn}}
\begin{document}

\maketitle

\begin{abstract}
Adversarial attacks are widely used to evaluate model robustness, and in black-box scenarios, the transferability of these attacks becomes crucial. Existing generator-based attacks have excellent generalization and transferability due to their instance-agnostic nature. However, when training generators for multi-target tasks, the success rate of transfer attacks is relatively low due to the limitations of the model's capacity. To address these challenges, we propose a novel Dual-Flow framework for multi-target instance-agnostic adversarial attacks, utilizing Cascading Distribution Shift Training to develop an adversarial velocity function. Extensive experiments demonstrate that Dual-Flow significantly improves transferability over previous multi-target generative attacks. For example, it increases the success rate from Inception-v3 to ResNet-152 by 34.58\%. Furthermore, our attack method shows substantially stronger robustness against defense mechanisms, such as adversarially trained models. The code of Dual-Flow is available at: \href{https://github.com/Chyxx/Dual-Flow}{https://github.com/Chyxx/Dual-Flow}.
\end{abstract}   
\section{Introduction}
\label{sec:intro}

Deep neural networks (DNNs) are highly vulnerable to adversarial attacks \cite{szegedy2013intriguing, goodfellow2014explaining, carlini2017towards, gao2024adversarial}, which can significantly compromise their reliability. Among these, targeted black-box attacks---where adversaries manipulate a model into misclassifying an input as a specific target class without direct access to the model---are the most challenging and impactful \cite{andriushchenko2020square, athalye2018synthesizing, luo2021generating}.

Adversarial attacks can be classified into instance-specific and instance-agnostic approaches. Instance-specific attacks \cite{dong2018boosting, xiong2022stochastic, eykholt2018robust} optimize perturbations for each input image using victim model gradients but often suffer from poor transferability. Instance-agnostic attacks \cite{xiao2018generating, naseer2019cross, yang2022boosting, fang2025clip} generalize perturbations over the dataset, leading to stronger black-box transferability. These methods typically rely on universal adversarial perturbations \cite{moosavi2017universal, zhang2020understanding} or generative models \cite{poursaeed2018generative, naseer2021generating}.

Generative model-based attacks can be further divided into single-target \cite{naseer2019cross, feng2023dynamic, wang2023towards} and multi-target \cite{yang2022boosting, fang2025clip} approaches. While single-target attacks achieve high success rates, they require training a separate model per target class, making them impractical for large-scale attacks. Multi-target attacks address this by conditioning a single generator on target labels but often suffer from reduced transferability and weak robustness against adversarial defenses.

Diffusion models \cite{ho2020denoising, song2020denoising, rombach2022high} offer strong generative capabilities, making them a promising tool for adversarial attacks. However, current diffusion-based attacks are all instance-specific methods, which means they need to access the target model's gradient information during inference. Moreover, choosing between stochastic and deterministic sampling methods for adversarial perturbation generation remains an open challenge.

To address these challenges, we propose a novel \textbf{Dual-Flow framework} for multi-target instance-agnostic adversarial attacks. Our approach integrates (1) a pretrained diffusion model to generate an intermediate perturbation distribution as the forward flow and (2) a fine-tuned lightweight LoRA-based velocity function as reverse flow for targeted adversarial refinement. We introduce \textbf{Cascading Distribution Shift Training} to improve attack capability and employ \textbf{dynamic gradient clipping} to enforce the $\ell_\infty$ constraint.

\begin{figure}[t]
\begin{center}
\centerline{\includegraphics[width=0.75\linewidth]{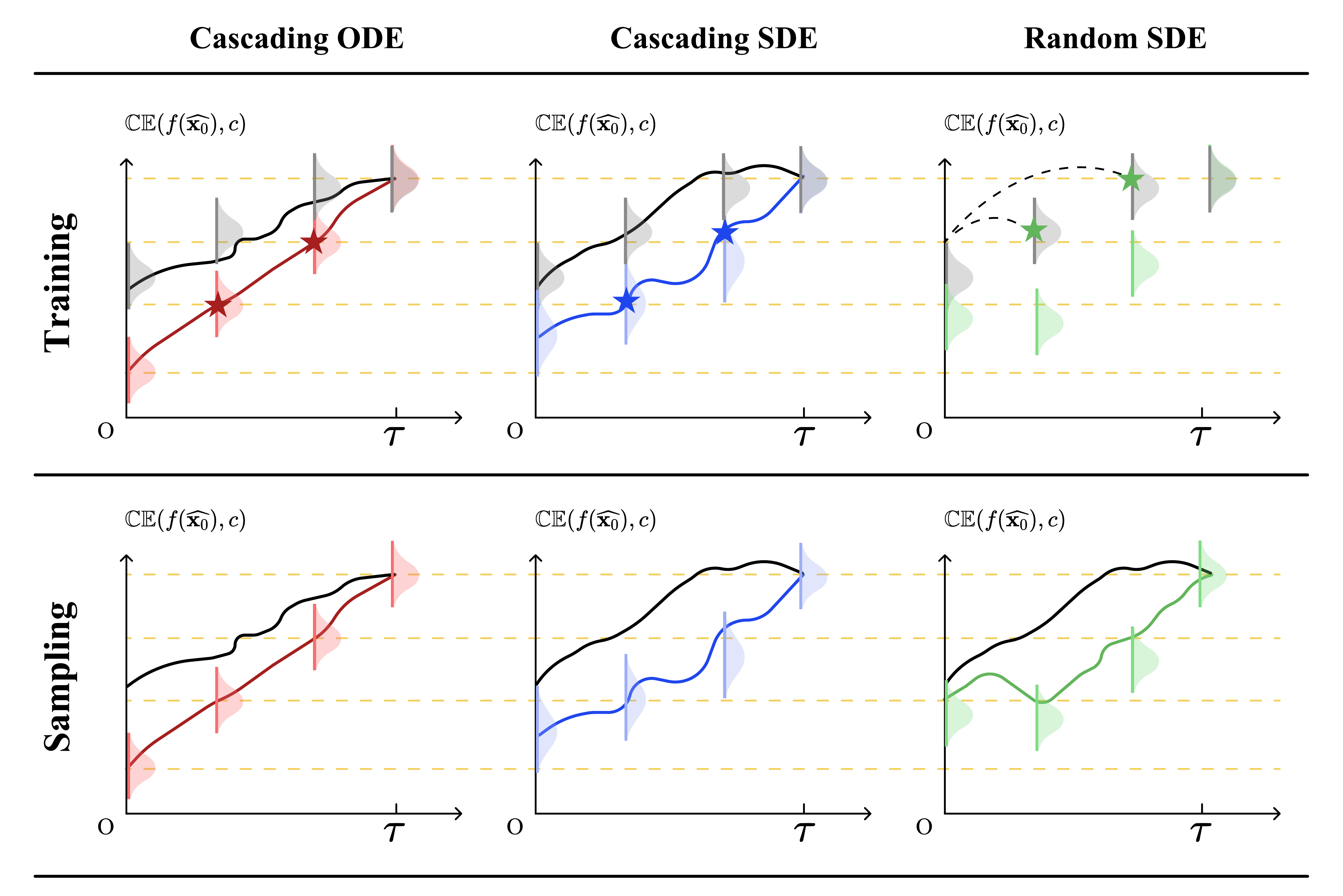}}
\caption{The comparison between Cascading ODE, Cascading SDE, and Random SDE for the second flow. The star shape represents the input for training the reverse flow. Notably, the Random SDE is observed to optimize in an incorrect distribution.}
\label{fig:model_comp}
\end{center}
\end{figure} 

As illustrated by the Cascading ODE and Cascading SDE in Figure~\ref{fig:model_comp}, we first follow the black trajectory to introduce a slight perturbation to the image. Then, we follow either the red or blue trajectory to generate an altered image, effectively exploiting this process to attack the target model.
Our main contributions are:
\begin{itemize}[leftmargin=*]
    \item \textbf{First application of flow-based ODE velocity training for adversarial attacks}, extending diffusion-based techniques beyond conventional score function training.
    \item \textbf{Dual-Flow algorithm}, integrating a pretrained diffusion-based forward ODE with a fine-tuned adversarial velocity function for structured perturbation generation.
 \item \textbf{Theoretical contribution on cascading improvement mechanism}, demonstrating how our method facilitates cascading improvements at later timesteps.
\end{itemize}
Extensive experiments demonstrate that our attack method achieves state-of-the-art black-box transferability in multi-target scenarios and exhibits high robustness against defense mechanisms.
\section{Preliminary}
\subsection{Instance-Agnostic Attacks}

Instance-agnostic attacks~\cite{luo2021generating,kong2020physgan,wang2019gan,poursaeed2018generative,yang2022boosting,naseer2021generating,naseer2019cross} learn perturbations based on data distributions rather than individual instances. These approaches, employing universal adversarial perturbations\cite{moosavi2017universal, zhang2020understanding} or generative models, have demonstrated superior transferability. This paper primarily focuses on the latter due to its greater flexibility and attack effectiveness.

Early generative model-based methods were primarily single-target attacks\cite{xiao2018generating, naseer2019cross, naseer2021generating, feng2023dynamic, wang2023towards}, requiring a separate model to be trained for each target class. Although these models exhibited high attack capabilities, the excessive training overhead limited their applicability when many target classes needed to be attacked. Recent research has proposed several multi-target attack methods\cite{yang2022boosting, fang2025clip} that condition the perturbation generative model on class labels\cite{yang2022boosting} or text embeddings\cite{fang2025clip} of classes. These approaches allow a single model to be trained to attack multiple target classes, significantly reducing the training overhead.

Consider a white-box image classifier characterized by the parameter $\theta$, denoted as $f: \mathcal{X} \to \mathcal{Y}$, 
where the input space $\mathcal{X} \subset \mathbb{R}^{C \times H \times W}$ corresponds to the image domain, and the output space 
$\mathcal{Y} \subset \mathbb{R}^{L}$ represents the confidence scores across various classes. Here, $L$ denotes the total number of classes. Given an original image $\mathbf{x} \in \mathcal{X}$ and a target class 
$c \in \mathcal{C}$, the goal of transferable multi-target generative attack is to generate the perturbation
$\boldsymbol{\delta} = G(\mathbf{x}, c)$ and the pertubed image $\boldsymbol{\mathbf{x^{\epsilon}}} = \mathbf{x} + \boldsymbol{\delta}$, in such a way that an unseen victim model $F$ predicts $c$ for the perturbed image, 
\textit{i.e.}, $\arg\max_{i \in \mathcal{C}} F(\mathbf{x}^\epsilon)_{i} = c$. Here $G$ is the generator trained on the known source model $f_{\theta}$. To ensure that the manipulated images remain visually 
indistinguishable from the originals, the perturbation is constrained using the $l_{\infty}$ norm such that $\|\boldsymbol{\mathbf{x} - \mathbf{x}^\epsilon}\|_{\infty} = \| \boldsymbol{\delta}\|_{\infty}< \epsilon$.

\subsection{Diffusion Models and Flow-based Models}
Diffusion models~\cite{sohl2015deep,ho2020denoising,song2020denoising,song2020score} have emerged as powerful generative models, particularly for continuous data such as audio~\cite{huang2023make} and images~\cite{rombach2022high}. Recently, Flow-based generative models~\cite{lipman2022flow,esser2024scaling,liu2022flow}, developed directly from ordinary differential equations, have also gained significant momentum. Given their strong generative capabilities, exploring their applications in adversarial attacks is natural.

\paragraph{Sampling Algorithms.}
One of the most appealing aspects of diffusion models is the flexibility in designing the sampling process. The generation process of diffusion models primarily follows two formulations: one based on the Stochastic Differential Equation (SDE) and the other on the Ordinary Differential Equation (ODE). Each approach has strengths and weaknesses, making them suitable for different scenarios.

\paragraph{Diffusion Models in Adversarial Attack.} Currently, diffusion models have found some applications in adversarial attacks. Some utilize diffusion models to create unrestricted adversarial examples\cite{chen2023advdiffuser, dai2025advdiff}, while others perform instance-specific attacks\cite{xue2023diffusion, chen2024diffusion}. However, they all rely on iterative optimization using the classifier model's gradients while generating adversarial examples, thus not qualifying as instance-agnostic attacks.
\section{Dual-Flow for Adversarial Attack}

We propose a Dual-Flow pipeline designed to transform an image $\mathbf{x} \in \mathcal{X}$ through a perturbed distribution $\mathcal{X}_\tau$ and ultimately into a constrained output space $\mathcal{X}^\epsilon$, which is $\{\mathbf{x}^\epsilon | \exists \mathbf{x}\in \mathcal{X}, \|\mathbf{x^\epsilon - \mathbf{x}}\|_\infty < \epsilon\}$. By construction, $\mathcal{X}^\epsilon$ enforces a maximum $\ell_\infty$ perturbation of $\epsilon$. Our method is instance-agnostic, which means that once training is completed, our model can generate adversarial examples during inference without requiring any access to the classifier model.

Specifically, we leverage the original ODE-based diffusion flow to map $\mathcal{X}$ to $\mathcal{X}_\tau$. Using a pretrained diffusion model's velocity function $\mathbf{v}_\phi(\cdot,\cdot)$ and a given input image $\mathbf{x} \sim \mathcal{X}$, we select a fixed timestep $\tau \in (0,1)$. The perturbed image $\mathbf{x}_\tau \sim \mathcal{X}_\tau$ is obtained by integrating the following equation:
\begin{equation}
    \frac{\partial}{\partial t}\Phi(\mathbf{x},t) = \mathbf{v}_\phi(\Phi(\mathbf{x},t), t), \quad \Phi(\mathbf{x}, 0) \sim \mathcal{X},
\label{forward ode}
\end{equation}
from $t = 0$ to $t = \tau$. 

To further map $\mathcal{X}_\tau$ to $\mathcal{X}^\epsilon$, we fine-tune a LoRA-based score function\cite{hu2021lora}, yielding a new velocity function $\mathbf{v}_\theta$. Then by integrating another equation:
\begin{equation}
    \frac{\partial}{\partial t}\Psi(\mathbf{x},t) = \mathbf{v}_\theta(\Psi(\mathbf{x},t), t), \quad \Psi(\mathbf{x}, \tau) \sim \mathcal{X}_\tau,
\label{reverse ode}
\end{equation}
from $t = \tau$ to $t = 0$. 

To train this second flow, we introduce a novel \emph{Cascading Distribution Shift Training} strategy, which addresses the challenges posed by the inaccessibility of intermediate distributions during training.

Finally, to ensure that outputs remain within $\mathcal{X}^\epsilon$, we apply dynamic gradient stops during training, coupled with hard clipping operations at the final timestep. This approach allows for richer intermediate representations while maintaining the required perturbation bounds.

Details of our approach are provided in the following subsections.
\subsection{A Construction of Better Extend Flow}

Firstly, we construct an extended flow based on $j$, which is the negative value of the cross-entropy function: 

\begin{equation}
\begin{split}
    j = -\mathbb{CE}(f(\mathbf{x}), c),
\end{split}
\label{eq:target}
\end{equation}
where $f$ is the source model and $c$ is the target label.
\begin{proposition}[Morse Flow Construction]
\label{morse flow construction}
    Under mild assumptions on the $\mathcal{X}^\epsilon$ and the function $j$, there exists $\epsilon > 0$, a unique smooth flow.
    \begin{equation}
        \Phi: \mathcal{X}^\epsilon \times [0, \epsilon] \to \mathcal{X}^\epsilon,
    \end{equation}
satisfying:
\begin{equation}
\begin{aligned}
\frac{d}{dt} \Phi(\mathbf{x}, t) &= \mathbf{v}(\Phi(\mathbf{x},t)), \\
\Phi(\mathbf{x}, 0) &= \mathbf{x},
\end{aligned}
\end{equation}
such that:
\begin{enumerate}
\item $\mathbf{v} = \alpha(\mathbf{x})\nabla_\mathbf{\mathbf{X}}j(\mathbf{x})$ almost everywhere
\item \( j(\Phi(\mathbf{x},\epsilon)) \geq j(\mathbf{x}) \) for all \( x \in \mathcal{X}^\epsilon \), and $>$ holds almost everywhere if $j$ is not trivial
\item Each \( \Phi(\cdot, t): \mathcal{X}^\epsilon \to \mathcal{X}^\epsilon \) is a diffeomorphism
\end{enumerate}
\end{proposition}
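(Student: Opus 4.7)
The plan is to build the flow by choosing a vector field of the form $\mathbf{v}(\mathbf{x}) = \alpha(\mathbf{x})\,\nabla_{\mathbf{x}} j(\mathbf{x})$, where $\alpha:\mathcal{X}^\epsilon \to [0,1]$ is a smooth scalar cutoff function equal to $1$ on the interior of $\mathcal{X}^\epsilon$ and tapering smoothly to $0$ in a collar neighbourhood of $\partial \mathcal{X}^\epsilon$. The role of $\alpha$ is twofold: it makes $\mathbf{v}$ vanish on the boundary of the $\ell_\infty$ box, so that the flow preserves $\mathcal{X}^\epsilon$, and it tames any irregularities of $\nabla j$ near that boundary. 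With this choice, part~(1) of the conclusion is immediate.

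Next, I would apply the Picard-Lindel\"of existence and uniqueness theorem to the ODE $\dot\Phi = \mathbf{v}(\Phi)$ with $\Phi(\mathbf{x},0)=\mathbf{x}$. Since $\mathcal{X}^\epsilon$ is compact and $\mathbf{v}$ is smooth with compact support inside it (assuming $j \in C^1$ on the support of $\alpha$, which is the ``mild assumption'' at play), $\mathbf{v}$ is globally Lipschitz, so a unique solution exists on $[0,\epsilon]$ for some sufficiently small $\epsilon > 0$ and stays in $\mathcal{X}^\epsilon$ by the boundary-vanishing of $\mathbf{v}$. Smooth dependence on initial conditions plus invertibility of the time-$t$ flow for smooth complete vector fields then gives part~(3): each $\Phi(\cdot,t)$ is a diffeomorphism onto its image. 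For part~(2), I would differentiate $j$ along a trajectory,
\begin{equation*}
\frac{d}{dt} j(\Phi(\mathbf{x},t)) = \nabla j(\Phi(\mathbf{x},t))\cdot \mathbf{v}(\Phi(\mathbf{x},t)) = \alpha(\Phi(\mathbf{x},t))\,\|\nabla j(\Phi(\mathbf{x},t))\|^2 \geq 0,
\end{equation*}
and integrate from $0$ to $\epsilon$ to obtain $j(\Phi(\mathbf{x},\epsilon)) \geq j(\mathbf{x})$. Strict inequality holds on any trajectory passing through a point where $\alpha > 0$ and $\nabla j \neq 0$; if $j$ is not locally constant, Sard's theorem (or a direct measure-theoretic argument using the diffeomorphism property) shows the complementary set has measure zero, giving the ``almost everywhere'' strict increase.

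The step I expect to be the main obstacle is the regularity of $j$ and the geometry of $\mathcal{X}^\epsilon$. For a realistic classifier $f$ built from ReLU networks, $j$ is only piecewise smooth, so $\nabla j$ exists merely almost everywhere; moreover $\mathcal{X}^\epsilon$ is a union of $\ell_\infty$ boxes whose boundary is not smooth. The clean ODE theory used above has to be rescued by mollifying $j$ (and taking a limit in which the a.e.\ statements in (1) and (2) survive) or by choosing $\alpha$ to vanish to sufficiently high order on the non-differentiable locus. This is precisely why the conclusions in (1) and (2) are stated in the ``almost everywhere'' form rather than pointwise, and it is where the honest analytic work lives.
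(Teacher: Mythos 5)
Your construction is essentially the paper's: multiply $\nabla j$ by a smooth scalar cutoff $\alpha$ that vanishes toward $\partial\mathcal{X}^\epsilon$, invoke Picard--Lindel\"of on the resulting vector field, and read off monotonicity of $j$ from $\tfrac{d}{dt}j(\Phi)=\alpha\|\nabla j\|^2\ge 0$. The paper's Appendix~\ref{proof morse flow construction} does the same, with two cosmetic differences: it writes the cutoff as $\eta(x)\mu(x)^m$, where $\mu$ is a smooth boundary-defining function raised to a power $m\ge n+1$, and it additionally assumes $j$ is Morse and multiplies in a second cutoff $\eta$ that vanishes near each critical point (a step that appears to buy nothing, since $\nabla j$ already vanishes there). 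The paper's version is more careful on boundary invariance: it derives the differential inequality $|\dot r|\le C r^{m+1}$ for $r=\mu(\Phi)$ and concludes trajectories cannot reach $\partial B$ in finite time, whereas you assert this more quickly.

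There is one internal inconsistency in your write-up that you should resolve, because it affects conclusion~(2). You first describe $\alpha$ as ``equal to $1$ on the interior \ldots\ and tapering smoothly to $0$ in a collar neighbourhood of $\partial\mathcal{X}^\epsilon$,'' but later rely on $\mathbf{v}$ being ``smooth with compact support inside it.'' These are different choices. If $\alpha$ genuinely has compact support strictly inside $\mathcal{X}^\epsilon$, then $\alpha\equiv 0$ on a collar of \emph{positive measure}, every point in that collar is a fixed point of the flow, and $j(\Phi(x,\epsilon))=j(x)$ on that collar --- so the ``$>$ holds almost everywhere'' clause fails. The construction you want (and the one the paper uses) is $\alpha>0$ on the open interior with $\alpha\to 0$ only \emph{at} the boundary; then the equality set is confined to $\partial\mathcal{X}^\epsilon\cup\{\nabla j=0\}$, which is measure-zero under the Morse (or ``$j$ not trivial'') hypothesis. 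With this $\alpha$, Lipschitz-with-compact-support no longer applies verbatim, and the boundary-avoidance step needs the quantitative argument the paper supplies, or at minimum the uniqueness-past-fixed-points observation: since boundary points are equilibria of a Lipschitz field, a trajectory starting in the interior cannot reach one in finite time.

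Your closing remark about the regularity of $j$ and the geometry of $\mathcal{X}^\epsilon$ is correct and, frankly, more honest than the paper. The paper proves the statement for a bounded open set with smooth boundary and a smooth Morse $j$ extending to $C^\infty(\overline{B})$, whereas the set $\mathcal{X}^\epsilon$ defined in the paper has only piecewise-smooth ($\ell_\infty$-box) boundary and $j=-\mathbb{CE}(f(\cdot),c)$ is at best piecewise smooth for practical networks. The phrase ``mild assumptions'' in the proposition silently absorbs this gap; your mollification suggestion is a reasonable way to justify the almost-everywhere phrasing of conclusions~(1)--(2), though neither you nor the paper carries it out.
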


A detailed proof is provided in Appendix~\ref{proof morse flow construction}. Proposition~\ref{morse flow construction} indicates we can find better extend flow to hack $j$ by $\nabla j$. In the following paragraphs, we will construct a concrete algorithm to realize it along with an existing flow.

\subsection{Cascading Distribution Shift Training}
\label{l2_v}

Although the \textit{in-the-wild} ODE trajectory is deterministic, obtaining exact intermediate samples remains a significant challenge, which poses difficulties for our approach. To address this issue, we propose the \textbf{Cascading Distribution Shift Training Algorithm}, specifically designed to enhance adversarial attack efficacy through two key mechanisms: 
(1) Enforcing a cascading hacking effect, where each perturbation step incrementally contributes to misleading the source model.  
(2) Ensuring the final perturbation follows the prescribed $\ell_\infty$ constraint.  
Further details are provided in Algorithm~\ref{alg:training}, where $l$ denotes the conditional input in real diffusion models.

\begin{algorithm}[t]
   \caption{Cascading Distribution Shift Training}
   \label{alg:training}
\begin{algorithmic}
   \STATE {\bfseries Input:} $\tau = N\delta$, stepsize $\delta$, model param. $\phi$, $\theta$, source model $f$, target labels set $\mathcal{C}$, training dataset $\{\mathbf{I}^i\}_{i\in \mathcal{I}}$, learning rate $l_r$
   \STATE Initialize $\theta = \phi$.
   \REPEAT
   \FOR{$i\in \mathcal{I}$}
   \STATE get $\mathbf{x}_0 = \mathbf{I}^i$
   \STATE sample $c \sim \mathcal{C}$
   \FOR{$t = 1$ {\bfseries to} $N$}
   \STATE $\mathbf{x}_{t\delta} = \mathbf{x}_{(t-1)\delta} + \mathbf{v}_\phi(\mathbf{x}_{(t-1)\delta}, (t-1)\delta, \varnothing) \delta$
   \ENDFOR
   \FOR{$t = N$ {\bfseries to} $1$}
   \STATE $\mathbf{x}_{(t-1)\delta} = \mathbf{x}_{t\delta} - \mathbf{v}_\theta(\mathbf{x}_{t\delta}, t\delta, c) \delta$
   \STATE $\widehat{\mathbf{x}_0} = \mathbf{x}_{t\delta} - \mathbf{v}_\theta(\mathbf{x}_{t\delta}, t\delta, c) t \delta$
   \STATE $\widehat{\mathbf{x}_0} = \operatorname{clip}\left( \widehat{\mathbf{x}_0}, \mathbf{x} - \epsilon, \mathbf{x} + \epsilon \right)$
   \STATE $\theta = \theta - l_r \cdot \nabla_\theta(\mathbb{CE}(f(\widehat{\mathbf{x}_0}), c))$
   \ENDFOR
   \ENDFOR
   \UNTIL{$\mathbf{v}_\theta$ convergence}
   \STATE {\bfseries Return:} Dual-Flow $\{\mathbf{v}_\phi,\mathbf{v}_\theta\}$
\end{algorithmic}
\end{algorithm} 

Our proposed training framework not only circumvents the challenge of inaccessible intermediate timesteps but also offers additional advantages. Proposition~\ref{cascading_improvement} formalizes that our algorithm progressively refines a coarse-to-fine representation, thus effectively leveraging information from ahead timesteps.
More concretely, due to the continuity of the ODE, our training algorithm enables a cascading optimization within an increasingly refined space. 

\begin{theorem}[Cascading Improvement at Adjoint Timesteps]
\label{cascading_improvement}

Consider two consecutive timesteps  $t, t - \delta$. Following Algorithm~\ref{alg:training}, when comparing the cases with and without updating  $\theta$  at  $t$, updating  $\theta$  results in an equal or lower cross-entropy for  $\widehat{\mathbf{x}_0}$ at $t-\delta$ when  $\delta$  is sufficiently small and all functions are smooth.
\end{theorem}

One crucial consideration is constraining the final result within $\mathcal{X}^\epsilon$. There are two primary methods to achieve this. The first approach incorporates the original ODE trajectory during model tuning, ensuring the output remains close to the original ODE flow. The second approach enforces the $\ell_\infty$ constraint or applies gradient clipping to suppress the influence of out-of-range image regions, guaranteeing that only in-range rewards contribute to model optimization. Our experiments show that dynamic gradient clipping yields the best performance among these methods.

Given the fixed model capacity, the Cascading Distribution Shift Training algorithm ensures greater consistency between the training and sampling processes, improving performance. This is visually illustrated in Figure~\ref{fig:model_comp}.

\subsection{Dual-Flow Sampling}

During the sampling process, following the proposed training algorithm, a given image $\mathbf{x} \in \mathcal{X}$ is first mapped to $\mathbf{x}_\tau$ via Eq.~\eqref{forward ode}. Subsequently, it is transformed into an intermediate state $\mathbf{x}'_{0}$ using Eq.~\eqref{reverse ode}. Finally, a hard truncation is applied to obtain the qualified perturbed sample $\mathbf{x}_0 \in \mathcal{X}^\epsilon$.

\begin{algorithm}[t]
   \caption{Dual-Flow Sampling}
   \label{alg:sampling}
\begin{algorithmic}
   \STATE {\bfseries Input:} $\tau = N\delta$, stepsize $\delta$, image $\mathbf{I}$,  target label $c$, Dual-Flow $\{\mathbf{v}_\phi, \mathbf{v}_\theta \}$
   \STATE $\mathbf{x}=\mathbf{I}$.
   \FOR{$t = 1$ {\bfseries to} $N$}
   \STATE $\mathbf{x}_{t\delta} = \mathbf{x}_{(t-1)\delta} + \mathbf{v}_\phi(\mathbf{x}_{(t-1)\delta}, (t-1)\delta, \varnothing) \delta$
   \ENDFOR
   \FOR{$t = N$ {\bfseries to} $1$}
   \STATE $\mathbf{x}_{(t-1)\delta} = \mathbf{x}_{t\delta} - \mathbf{v}_\theta(\mathbf{x}_{t\delta}, t\delta, c) \delta$
   \ENDFOR
   \STATE $\mathbf{x}_0 = \operatorname{clip}\left( \mathbf{x}_0, \mathbf{x} - \epsilon, \mathbf{x} + \epsilon \right)$
   \STATE {\bfseries Return:} $\mathbf{x}_0$
\end{algorithmic}
\end{algorithm}

\begin{table*}[h!]
  \centering
  \caption{Attack success rates (\%) for multi-target attacks on normally trained models using the ImageNet NeurIPS validation set. The perturbation budget is constrained to $l_{\infty} \leq 16/255$. * indicates white-box attacks. The results are averaged across 8 different target classes, and the overall average on the far right is computed solely for black-box attacks.}
 
  \vskip 0.15in
  \resizebox{1.0\linewidth}{!}{
  \begin{small}
    \begin{tabular}{cccccccccc}
    \toprule 
    Source   & Method   & Inc-v3   & Inc-v4   & Inc-Res-v2    & Res-152  & DN-121   & GoogleNet       & VGG-16 & Average\\
    \midrule
    \multirow{11}[0]{*}{Inc-v3} 
             & MIM      & 99.90$^*$ & 0.80     & 1.00     & 0.40     & 0.20    & 0.20     & 0.30    & 0.48\\
             & TI-MIM   & 98.50$^*$    & 0.50     & 0.50     & 0.30     & 0.20     & 0.40     & 0.40  & 0.38\\
             & SI-MIM   & 99.80$^*$    & 1.50     & 2.00     & 0.80     & 0.70     & 0.70     & 0.50  & 1.03\\
             & DIM      & 95.60$^*$    & 2.70     & 0.50     & 0.80     & 1.10     & 0.40     & 0.80  & 1.05\\
             & TI-DIM   & 96.00$^*$    & 1.10     & 1.20     & 0.50     & 0.50     & 0.50     & 0.80  & 0.77\\
             & SI-DIM   & 90.20$^*$    & 3.80     & 4.40     & 2.00     & 2.20     & 1.70     & 1.40  & 2.58\\
             & Logit    & 99.60$^*$    & 5.60     & 6.50     & 1.70     & 3.00     & 0.80     & 1.50  & 3.18\\
             & SU       & 99.59$^*$    & 5.80     & 7.00     & 3.35     & 3.50     & 2.00     & 3.94  & 4.26\\
             \cmidrule(lr){2-10}
             & C-GSP    & 93.40$^*$    & 66.90    & 66.60    & 41.60    & 46.40    & 40.00    & 45.00 & 51.08\\
             & CGNC     & 96.03$^*$    & 59.43    & 48.06    & 42.48    & 62.98    & 51.33    & 52.54 & 52.80\\
             & Dual-Flow& 90.08$^*$& \textbf{77.19}&\textbf{66.76}&\textbf{77.06}&\textbf{82.64}&\textbf{73.01}&\textbf{67.09}&\textbf{73.96}\\
             \midrule
    \multirow{11}[0]{*}{Res-152} 
             & MIM      & 0.50     & 0.40     & 0.60     & 99.70$^*$ & 0.30     & 0.30     & 0.20   & 0.38\\
             & TI-MIM   & 0.30     & 0.30     & 0.30     & 96.50$^*$    & 0.30     & 0.40     & 0.30 & 0.32\\
             & SI-MIM   & 1.30     & 1.20     & 1.60     & 99.50$^*$    & 1.00     & 1.40     & 0.70 & 1.20\\
             & DIM      & 2.30     & 2.20     & 3.00     & 92.30$^*$    & 0.20     & 0.80     & 0.70 & 1.53\\
             & TI-DIM   & 0.80     & 0.70     & 1.00     & 90.60$^*$    & 0.60     & 0.80     & 0.50 & 0.73\\
             & SI-DIM   & 4.20     & 4.80     & 5.40     & 90.50$^*$    & 4.20     & 3.60     & 2.00 & 4.03\\
             & Logit    & 10.10    & 10.70    & 12.80    & 95.70$^*$    & 12.70    & 3.70     & 9.20 & 9.87\\
             & SU       & 12.36    & 11.31    & 16.16    & 95.08$^*$    & 16.13    & 6.55    & 14.28 & 12.80\\
             \cmidrule(lr){2-10}
             & C-GSP    & 37.70    & 47.60    & 45.10    & 93.20$^*$    & 64.20    & 41.70    & 45.90 & 47.03\\
             & CGNC     & 53.39    & 51.53    & 34.24    & 95.85$^*$    & 85.66    & 62.23    & 63.36 & 58.40\\
             & Dual-Flow&\textbf{69.58}&\textbf{71.92}&\textbf{56.10}&92.39$^*$&\textbf{85.73}&\textbf{73.65}&\textbf{67.59}&\textbf{70.76}\\
             \bottomrule
    \end{tabular}%
  \label{tab:main}%
  \end{small}}
    \vskip -0.1in
\end{table*}%

\subsection{Deterministic Flow vs. Stochastic Flow}
\label{subsection:df_vs_sf}

An important consideration is the choice between a deterministic flow, modeled by an ODE, and a stochastic flow, modeled by an SDE. This decision primarily depends on the second flow, as the first is inherited from a pretrained diffusion model.

When translating the distribution $\mathcal{X}^\tau$ to $\mathcal{X}^\epsilon$, a simple rescaling and noise injection into $\mathcal{X}^\epsilon$ is insufficient to fully recover $\mathcal{X}^\tau$. Consequently, this transformation falls outside the standard diffusion-based framework.

A natural solution is to construct an \textit{in-the-wild} ODE (as we do) or SDE that maps $\mathcal{X}^\tau$ to $\mathcal{X}^\epsilon$. In the ODE formulation, we define a velocity function $\mathbf{v}_\theta$ that satisfies Eq.~\eqref{reverse ode}. A similar approach applies to the SDE case, where stochastic noise facilitates the distribution transition.

When comparing from the perspectives of randomness and determinism, as shown in Figure~\ref{fig:model_comp}, we label our framework as \textbf{Cascading ODE} and implement two SDE-based training algorithms. One injects noise at a random timestep within $(0, \tau)$, labeled as \textbf{Random SDE}, while the other first directly adds noise at $\tau$ and then reverses the flow using DDPM, resulting in a weak cascading relationship, labeled as \textbf{Cascading SDE}. While SDE-based training algorithms more closely resemble original diffusion models, they present two key challenges.

First, \textbf{Cascading SDE} introduces a random term, which may make it more difficult to construct the Cascading Improvement relationship as stated in Proposition~\ref{cascading_improvement} for \textbf{Cascading ODE}.
Second, sampling with SDEs tends to produce unstable results, where larger step sizes exacerbate accumulated errors, further impacting reliability, as demonstrated in our experiments.

As for \textbf{Random SDE}, it exhibits the worst performance because when sampling $\mathbf{x}_t$ by directly adding noise, the distribution of $\mathbf{x}_t$ remains unchanged. Consequently, even slight training of the reverse flow leads to a distribution mismatch, as illustrated by the star in the last column of Figure~\ref{fig:model_comp}.

\begin{table*}[h!]
  \centering
  \caption{Attack success rates (\%) for single-target attacks against normally trained models on ImageNet NeurIPS validation set. Note that CGNC$^{\dagger}$ and Dual-Flow$^{\dagger}$ denote the single-target variants of CGNC and our proposed Dual-Flow, respectively. The perturbation budget is constrained to $l_{\infty} \leq 16/255$. * indicates white-box attacks. The results are averaged across 8 different target classes, and the overall average on the far right is computed solely for black-box attacks.}
  \vskip 0.15in
  \resizebox{1.0\linewidth}{!}{
  \begin{small}
    \begin{tabular}{cccccccccc}
    \toprule
    Source   & Method   & Inc-v3   & Inc-v4   & Inc-Res-v2 & Res-152   & DN-121   & GoogleNet & VGG-16 & Average\\
    \midrule
    \multirow{7}[0]{*}{Inc-v3} 
             & GAP      & 86.90$^{*}$    & 45.06    & 34.48    & 34.48    & 41.74    & 26.89    & 34.34 & 36.16\\
             & CD-AP             & 94.20$^{*}$    & 57.60    & 60.10    & 37.10    & 41.60    & 32.30    & 41.70 & 45.07\\
             & TTP               & 91.37$^{*}$    & 46.04    & 39.37    & 16.40    & 33.47    & 25.80    & 25.73 & 31.14\\
             & DGTA-PI           & 94.63$^{*}$    & 67.95    & 55.03    & 50.50    & 47.38    & 47.67    & 48.11 & 52.77\\
             & CGNC$^{\dagger}$  & 98.84$^{*}$    & 74.76    & 64.48    & 62.00    & 78.94    & 69.06    & 70.74 & 70.00\\
             & Dual-Flow             & 90.08$^{*}$ & 77.19 & 66.76 & 77.06 & 82.64 & 73.01 & 67.09 & 73.96\\
             & Dual-Flow$^{\dagger}$ & 91.41$^{*}$ & \textbf{78.85} & \textbf{70.59} & \textbf{79.12} & \textbf{83.36} & \textbf{77.52} & \textbf{71.29} & \textbf{76.79}\\
             \midrule
    \multirow{7}[0]{*}{Res-152} 
             & GAP      & 30.99    & 31.43    & 20.48    & 84.86$^{*}$    & 58.35    & 29.89    & 39.70 & 35.14\\
             & CD-AP             & 33.30    & 43.70    & 42.70    & 96.60$^{*}$    & 53.80    & 36.60    & 34.10 & 40.70\\
             & TTP               & 62.03    & 49.20    & 38.70    & 95.12$^{*}$    & 82.96    & 65.09    & 62.82 & 60.13\\
             & DGTA-PI           & 66.83    & 53.62    & 47.61    & 96.48$^{*}$    & 86.61    & 68.29    & 69.58 & 65.42\\
             & CGNC$^{\dagger}$  & 68.86    & 69.45    & 45.71    & 98.61$^{*}$    & \textbf{91.14}    & 69.83    & 68.05 & 68.84\\
             & Dual-Flow             & 69.58 & 71.92 & 56.10 & 92.39$^{*}$ & 85.73 & 73.65 & 67.59 & 70.76\\
             & Dual-Flow$^{\dagger}$ & \textbf{72.25} & \textbf{74.35} & \textbf{58.44} & 93.65$^{*}$ & 87.61 & \textbf{75.45} & \textbf{71.11} & \textbf{76.12}\\
             \bottomrule
    \end{tabular}%

  \label{tab:single}%
  \end{small}}
  \vskip -0.1in
\end{table*}%

\section{Experiments}
\label{sec:experiments}
\begin{table*}[h!]

  \centering
  \caption{Attack success rates (\%) for multi-target attacks against robust models on ImageNet NeurIPS validation set. The perturbation budget $l_{\infty} \leq 16/255$. The results are averaged on 8 different target classes.}
  \vskip 0.15in
  \resizebox{1.0\linewidth}{!}{
  \begin{small}
    \begin{tabular}{ccccccccc}
    \toprule    
    Source   & Method   & $\textrm{Inc-v3}_\textrm{adv}$ & $\textrm{IR-v2}_\textrm{ens}$ & $\textrm{Res50}_\textrm{SIN}$ & $\textrm{Res50}_\textrm{IN}$ & $\textrm{Res50}_\textrm{fine}$ & $\textrm{Res50}_\textrm{Aug}$ & Average\\ 
    \midrule
    \multirow{11}[0]{*}{Inc-v3} 
             & MIM      & 0.16     & 0.10     & 0.20     & 0.27     & 0.44     & 0.19 & 0.23\\
             & TI-MIM   & 0.21     & 0.19     & 0.33     & 0.49     & 0.68     & 0.31 & 0.37\\
             & SI-MIM   & 0.13     & 0.19     & 0.26     & 0.43     & 0.63     & 0.29 & 0.32\\
             & DIM      & 0.11     & 0.09     & 0.16     & 0.33     & 0.39     & 0.19 & 0.21\\
             & TI-DIM   & 0.15     & 0.13     & 0.16     & 0.21     & 0.33     & 0.14 & 0.19\\
             & SI-DIM   & 0.19     & 0.21     & 0.43     & 0.71     & 0.84     & 0.46 & 0.47\\
             & Logit    & 0.30     & 0.30     & 0.70     & 1.23     & 3.14     & 0.86 & 1.09\\
             & SU       & 0.49     & 0.41     & 0.84     & 1.75     & 3.55     & 1.04 & 1.35\\
             \cmidrule(lr){2-9}
             & C-GSP    & 20.41    & 18.04    & 6.96     & 33.76    & 44.56    & 21.95 & 24.28\\
             & CGNC     & 24.36    & 22.54    & 8.85     & 40.83    & 52.18    & 22.85 & 28.60\\
             & Ours & \textbf{51.54} & \textbf{55.62} & \textbf{45.86} & \textbf{74.56} & \textbf{78.54} & \textbf{67.56} & \textbf{62.28}\\
             \midrule
    \multirow{11}[0]{*}{Res-152}
             & MIM      & 0.19     & 0.15     & 0.28     & 1.58     & 2.75     & 0.78 & 0.96\\
             & TI-MIM   & 0.61     & 0.73     & 0.50     & 2.51     & 4.75     & 1.76 & 1.81\\
             & SI-MIM   & 0.24     & 0.24     & 0.39     & 0.66     & 0.84     & 0.36 & 0.46\\
             & DIM      & 0.63     & 0.37     & 0.94     & 8.50     & 14.22    & 3.77 & 4.74\\
             & TI-DIM   & 0.23     & 0.30     & 0.28     & 0.76     & 1.49     & 0.49 & 0.59\\
             & SI-DIM   & 0.71     & 0.71     & 0.75     & 2.73     & 3.89     & 1.37 & 1.69\\
             & Logit    & 1.15     & 1.18     & 1.65     & 6.70     & 15.46    & 5.93 & 5.34\\
             & SU       & 2.12     & 1.20     & 1.95     & 7.53     & 21.14    & 6.95 & 6.82\\
             \cmidrule(lr){2-9}
             & C-GSP    & 14.60    & 16.01    & 16.84    & 60.30    & 65.51    & 42.88 & 36.02\\
             & CGNC     & 22.21    & 26.71    & 29.83    & 79.80    & 84.05    & 63.75 & 51.06\\
             & Ours & \textbf{44.50} & \textbf{54.09} & \textbf{59.35} & \textbf{83.05} & \textbf{84.28} & \textbf{76.35} & \textbf{66.94}\\
             \bottomrule
    \end{tabular}%

  \label{tab:adv}%
  \end{small}}
  \vskip -0.1in
\end{table*}%

\subsection{Experimental Settings}

\paragraph{Dataset.} Following \cite{yang2022boosting,feng2023dynamic,fang2025clip}, we train the model on the ImageNet training set\cite{deng2009imagenet} and evaluate the attack performance using ImageNet-NeurIPS (1k) dataset proposed by NeurIPS 2017 adversarial competition\cite{Nips2017}. 

\paragraph{Victim Models.} We consider various naturally trained models, including Inception-v3 (Inc-v3) \cite{szegedy2016rethinking}, Inception-v4 (Inc-v4) \cite{szegedy2017inception}, Inception-ResNet-v2 (Inc-Res-v2) \cite{szegedy2017inception}, ResNet-152 (Res-152) \cite{he2016identity}, DenseNet-121 (DN-121) \cite{huang2017densely}, GoogleNet \cite{szegedy2015going}, and VGG-16 \cite{simonyan2014very}. 

For further evaluation, we also analyze the performance of our method on robustly trained models. These include adv-Inception-v3 ($\textrm{Inc-v3}_\textrm{adv}$) \cite{goodfellow2014explaining}, ens-adv-Inception-ResNet-v2 ($\textrm{IR-v2}_\textrm{ens}$) \cite{hang2020ensemble}, and several robustly trained ResNet-50 models. The ResNet-50 variants are: $\textrm{Res50}_\textrm{SIN}$ (trained on stylized ImageNet), $\textrm{Res50}_\textrm{IN}$ (trained on a mixture of stylized and Nature ImageNet), $\textrm{Res50}_\textrm{fine}$ (further fine-tuned with an auxiliary dataset \cite{geirhos2018imagenet}), and $\textrm{Res50}_\textrm{Aug}$ (trained with advanced data augmentation techniques from Augmix \cite{hendrycks2019augmix}).

\paragraph{Baseline Methods.} We compare our attack with several attack methods. For instance-specific attacks, we consider MIM \cite{dong2018boosting}, DIM \cite{xie2019improving}, SIM \cite{lin2019nesterov}, DIM \cite{dong2019evading}, Logit \cite{zhao2021success}, and SU \cite{wei2023enhancing}. For instance-agnostic attacks, we consider C-GSP \cite{yang2022boosting}, CGNC \cite{fang2025clip}, GAP \cite{poursaeed2018generative}, CD-AP \cite{naseer2019cross}, TTP \cite{naseer2021generating}, and DGTA-PI\cite{feng2023dynamic}. Among them, C-GSP \cite{yang2022boosting}and CGNC \cite{fang2025clip}are multi-target generative attacks, and the others are single-target generative attacks. For SU attack \cite{wei2023enhancing}, we choose to compare with its best version DTMI-Logit-SU. For CGNC \cite{fang2025clip}, we also consider its single target variant and compare it to a single target attack method.

\paragraph{Implementation Details.} We adopt stable-diffusion \cite{rombach2022high} as our pre-trained diffusion model. We set $\tau = 0.25$ and $N = 6$ for training and testing. The LoRA rank is 16.

Following previous work \cite{yang2022boosting,feng2023dynamic,fang2025clip}, we choose Res-152 and Inc-v3 as source models to train our model. The perturbation budget $\epsilon$ is 16/255. We conduct 50k steps of training for multi-target tasks. 
To compare our method with other single-target attacks, we further fine-tune our model for an additional 10k steps to specialize in a single target class(more details provided in Appendix \ref{appendix:single_finetune}).
For multi-target training, we use a learning rate of $2.5\times10^{-5}$ and a total batch size of 8 (distributed across two NVIDIA RTX 3090 GPUs, each with 24GB memory and batch size 4). Training under this setting takes approximately one day to complete. For single-target fine-tuning, we set the learning rate to $1\times10^{-5}$ and a batch size of 4, conducted on a single NVIDIA RTX 3090 GPU, which requires approximately 4 hours. In total, the experiments involve approximately 160 GPU hours.

\subsection{Transferability Evaluation}
We assess the effectiveness of our proposed Dual-Flow for black-box target attacks through a series of experiments. To ensure consistency with previous work \cite{yang2022boosting,feng2023dynamic,fang2025clip}, we select eight distinct target classes \cite{zhang2020understanding} to conduct the target black-box attack testing protocol. We use the average attack success rate (ASR) across 8 target classes as an evaluation metric.

\paragraph{Multi-Target Black-Box Attack.}
We initially conduct attacks on normally trained models to evaluate the performance of multi-target attacks. The results in Table \ref{tab:main} show that our proposed Dual-Flow method exhibits significantly superior transferability, outperforming state-of-the-art instance-specific and instance-agnostic methods. Specifically, our method achieves an average ASR improvement of 21.16\% and 12.36\% over CGNC \cite{fang2025clip} using Inc-v3 and Res-152 as source models, respectively, on black-box models. Notably, instance-specific methods, despite higher success rates in white-box settings, tend to overfit the source models' classification boundaries, resulting in poor performance when transferred to black-box models.

\paragraph{Single-Target Black-Box Attack.}

To further evaluate our method's effectiveness, we compare it with other state-of-the-art instance-agnostic single-target attacks. Multi-target attacks are inherently more challenging than single-target ones, disadvantaging our model in such comparisons. To ensure fairness, we applied a masked fine-tuning technique similar to CGNC \cite{fang2025clip}, allowing us to fine-tune our model separately for each target class and create single-target variants.

The results in Table \ref{tab:single} show that after fine-tuning, Dual-Flow$^{\dagger}$ achieves higher attack success rates and generally outperforms leading single-target methods. Notably, our method excels in average black-box attack capability even without individual fine-tuning for the eight target classes. This demonstrates our approach's significant capacity and effectiveness in multi-target attacks, reducing the need for separate models for each target class in resource-constrained scenarios.

\begin{figure*}[ht!]
\vskip 0.2in
\begin{center}
\centerline{\includegraphics[width=.8\linewidth]{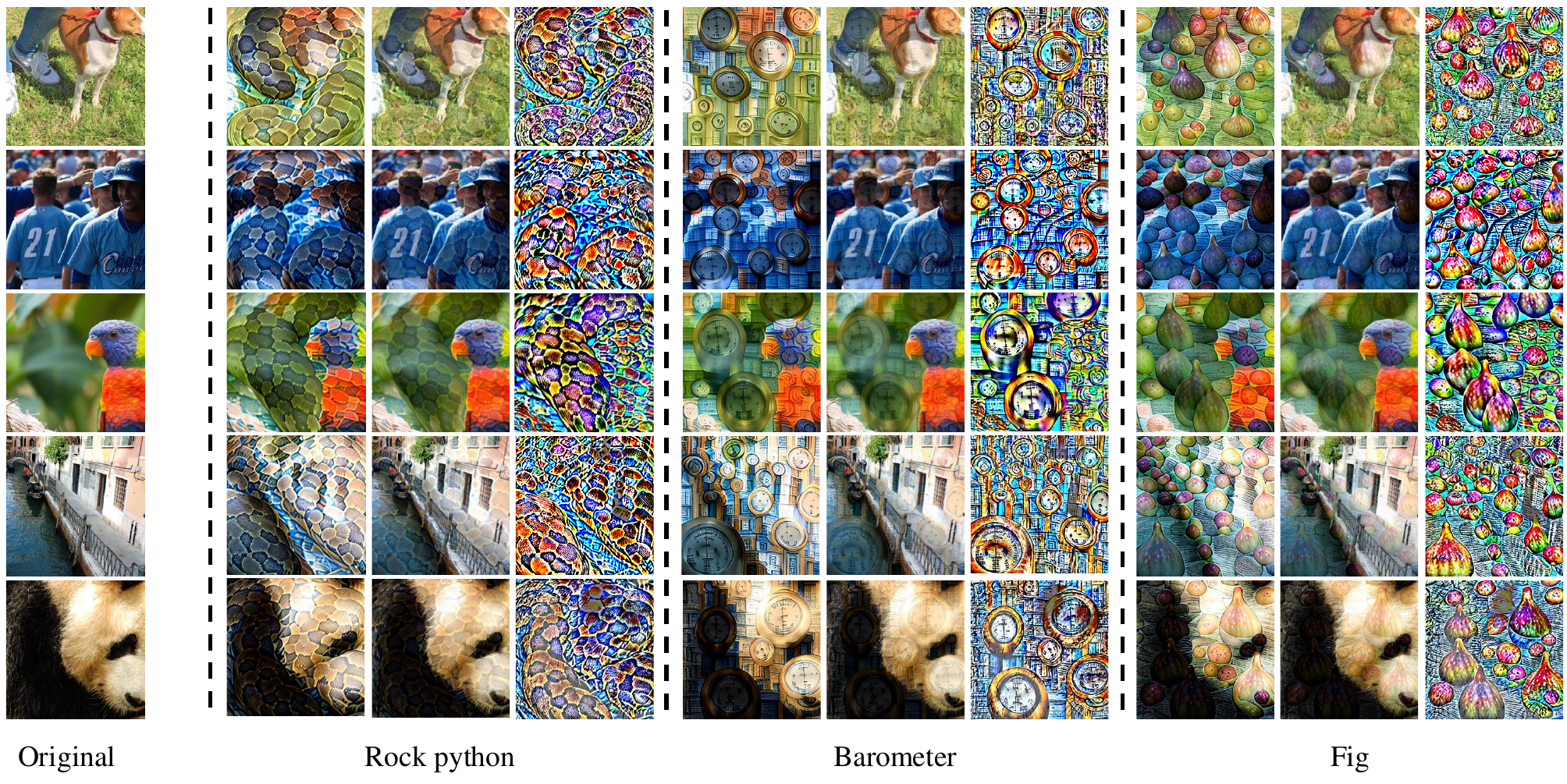}}
\caption{Visualization results of different input images targeting various classes. For each text prompt of the target class, the left column displays the adversarial examples generated before clipping, the middle column shows the adversarial examples after clipping, and the right column presents the corresponding adversarial perturbations, which represent the differences between the clipped adversarial examples and the original images. Note that the perturbations are scaled to a range between 0 and 1. The source model used is Inc-v3.}

\label{fig:visual}
\end{center}
\vskip -0.2in
\end{figure*}

\subsection{Attack Under Defense Strategies}

To demonstrate the robustness of our proposed Dual-Flow, we evaluate its performance against several widely used defense mechanisms.

\paragraph{Robustly Trained Networks.}

We first consider attacking six robustly trained networks, with results in Table \ref{tab:adv}. Attacking robustness-augmented models is challenging, as previous methods see a significant drop in success rates. However, our approach consistently misleads black-box classifiers into predicting the specified classes, showing marked improvement over earlier multi-target methods. Notably, using Inc-v3 as the source model, the average attack success rate against the six robust models increases significantly from 28.60\% to 62.28\%, highlighting our method's effectiveness.

\begin{wrapfigure}[19]{r}{0.48\textwidth}
    \vskip -0.3in
    \begin{center}
        \begin{minipage}{0.49\linewidth}
            \centering
            \includegraphics[width=\linewidth]{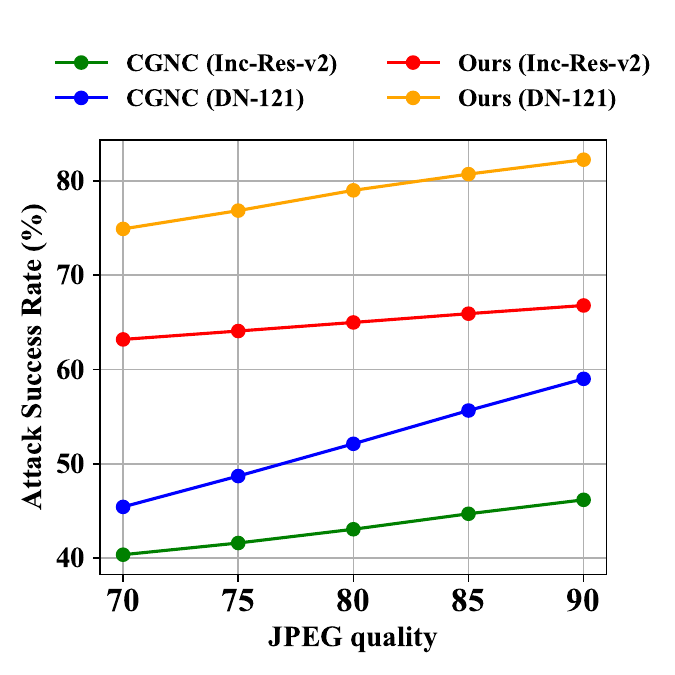}
            \subcaption{JPEG compression}
        \end{minipage}
        \hfill
        \begin{minipage}{0.49\linewidth}
            \centering
            \includegraphics[width=\linewidth]{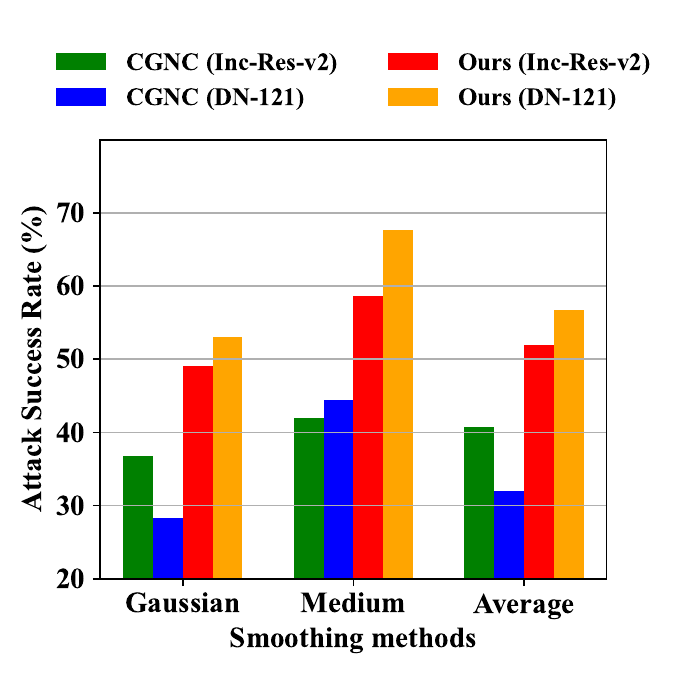}
            \subcaption{Input smoothing}
        \end{minipage}
    \end{center}
    \caption{A comparison of CGNC and our method regarding attack success rates against various input processing defense strategies. The results against JPEG compression are shown in (a), while (b) presents the outcomes against different input smoothing methods. Inc-v3 is  the source model and Inc-Res-v2, along with DN-121, are the target models.}
    \label{input_process}
\end{wrapfigure}

\paragraph{Input Process Defense.}

We compared our method's performance with the state-of-the-art multi-target attack method CGNC against input preprocessing defenses, such as image smoothing \cite{ding2019advertorch} and JPEG compression \cite{dziugaite2016study}. As shown in Figure \ref{input_process}, our method consistently outperforms CGNC under these defenses. For example, using Inc-v3 as the source model and DN-121 as the target model, our method achieves a 52.99\% success rate under Gaussian smoothing, compared to CGNC's 28.27\%. This highlights the superior effectiveness of our approach in overcoming input preprocessing defenses.

\subsection{Visualization}

To gain a deeper understanding of the effectiveness of our method, we visualized both the unclipped and clipped samples generated by our approach. Additionally, for consistency with other perturbation-based attack methods, we visualized the equivalent adversarial perturbations, defined as the pixel differences between the clipped samples and the clean images. As illustrated in Figure \ref{fig:visual}, our method first transforms the original image into one that maintains a similar layout and color scheme but becomes semantically closer to the target class. This transformed image is subsequently clipped to ensure its pixel differences from the original image remain within the epsilon bound. Visually, the clipped image retains substantial semantic features of the target class. Notably, the adversarial perturbations also exhibit distinct semantic patterns aligned with the target class, further validating the effectiveness of our approach.

\subsection{Ablation Study}

To further validate the effectiveness of our chosen Cascading ODE, we conducted a series of ablation experiments in this section. Here, Dual-Flow-co represents the original method, Dual-Flow-cs denotes the Cascading SDE variant, and Dual-Flow-rs denotes the Random SDE variant. During the reverse process at inference time, these variants employ either the DDPM scheduler or the DDIM scheduler. As shown in Table \ref{tab:ode_vs_sde}, our method significantly outperforms the other variants in white-box and black-box transfer attacks. 
\begin{wrapfigure}[16]{r}{0.48\textwidth}
\vspace{-0.2em} 
\begin{center}
{\includegraphics[width=0.8\linewidth]{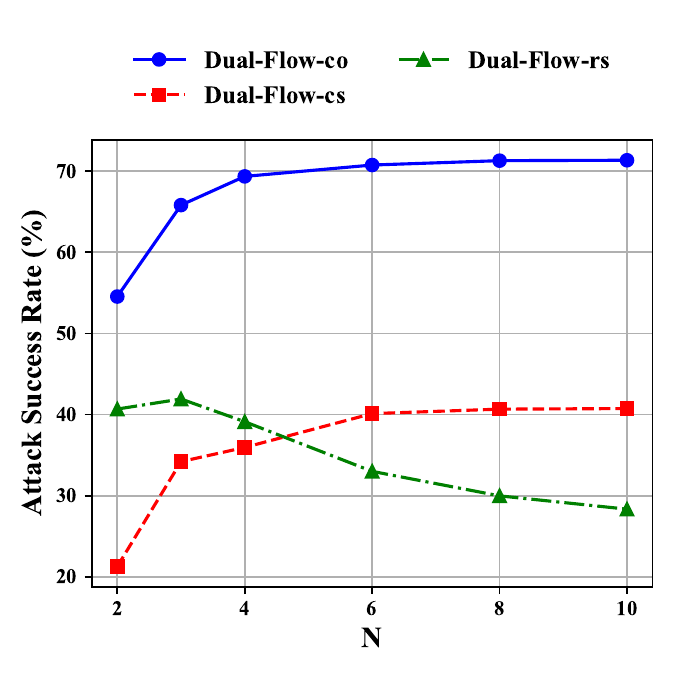}}
\end{center}
\vskip -0.2in
\caption{The multi-target black-box attack success rates of several variants of our method. The source model used is Res-152.}
\label{more_ablation}
\end{wrapfigure}

Furthermore, we compared the impact of different sampling steps $N$ during the reverse process. As shown in Figure \ref{more_ablation}, increasing the sampling steps steadily increases success rates for both Dual-Flow-co and Dual-Flow-cs. However, for Dual-Flow-rs, the success rate quickly declines as the inference steps increase, supporting our analyses in Section \ref{subsection:df_vs_sf} and Figure \ref{fig:model_comp}.

\section{Discussion}
\paragraph{Potential Advantage of SDE.} Although our experiments with Cascading SDE have not yet surpassed the performance of Cascading ODE, we believe that methods based on Schrödinger bridges~\cite{de2021diffusion} have the potential to bring significant improvements. Schrödinger bridge formulations provide a principled way to learn stochastic transport maps, which could offer better control over the reverse trajectory and enhance the stability of the cascading distribution shift.

\begin{table}[h!]
  \caption{The multi-target attack success rates of several variants of our method. The source model used is Res-152. The white-box attack success rate refers to the performance on Res-152, while the black-box attack success rate represents the average performance across six black-box models.}
  \label{tab:ode_vs_sde}%
  \centering
  \vskip 0.15in
  \begin{small}
      \begin{tabular}{ccc}  
      \toprule
      Method & White-box & Black-box \\
      \midrule
      Dual-Flow-co & 92.39  & 70.76\\
      Dual-Flow-cs + DDIM & 68.56  & 40.12 \\
      Dual-Flow-cs + DDPM & 74.58  & 46.04 \\
      Dual-Flow-rs + DDIM & 55.39  & 33.00 \\
      Dual-Flow-rs + DDPM & 29.19  & 14.86 \\
      \bottomrule
      \end{tabular}
  \end{small}
  \vskip -0.1in
\end{table}%
\section{Conclusion}
We have introduced Dual-Flow, a novel framework for highly transferable multi-target adversarial attacks. By employing Cascading Distribution Shift Training to develop an adversarial velocity function, our approach addresses the limitations of existing methods. Extensive experimental results demonstrate that Dual-Flow achieves remarkable improvements in transferability and robustness compared to previous multi-target generative attacks. These findings highlight the potential of Dual-Flow as a powerful tool for evaluating and improving the robustness of generative models.
\begin{ack}
This work was supported in part by the Natural Science Foundation of China under Grant No. 62222606 and the Ant Group Security and Risk Management Fund. We sincerely thank Qixin Wang for her assistance in creating the figures for this paper.

\end{ack}
\bibliographystyle{ref}
\bibliography{ref}

\newpage
\section*{NeurIPS Paper Checklist}


\begin{enumerate}

\item {\bf Claims}
    \item[] Question: Do the main claims made in the abstract and introduction accurately reflect the paper's contributions and scope?
    \item[] Answer: \answerYes{} 
    \item[] Justification: The claims made in the abstract and introduction are supported in \cref{sec:experiments}.
    \item[] Guidelines:
    \begin{itemize}
        \item The answer NA means that the abstract and introduction do not include the claims made in the paper.
        \item The abstract and/or introduction should clearly state the claims made, including the contributions made in the paper and important assumptions and limitations. A No or NA answer to this question will not be perceived well by the reviewers. 
        \item The claims made should match theoretical and experimental results, and reflect how much the results can be expected to generalize to other settings. 
        \item It is fine to include aspirational goals as motivation as long as it is clear that these goals are not attained by the paper. 
    \end{itemize}

\item {\bf Limitations}
    \item[] Question: Does the paper discuss the limitations of the work performed by the authors?
    \item[] Answer: \answerYes{} 
    \item[] Justification: We discuss limitations in the Appendix.
    \item[] Guidelines:
    \begin{itemize}
        \item The answer NA means that the paper has no limitation while the answer No means that the paper has limitations, but those are not discussed in the paper. 
        \item The authors are encouraged to create a separate "Limitations" section in their paper.
        \item The paper should point out any strong assumptions and how robust the results are to violations of these assumptions (e.g., independence assumptions, noiseless settings, model well-specification, asymptotic approximations only holding locally). The authors should reflect on how these assumptions might be violated in practice and what the implications would be.
        \item The authors should reflect on the scope of the claims made, e.g., if the approach was only tested on a few datasets or with a few runs. In general, empirical results often depend on implicit assumptions, which should be articulated.
        \item The authors should reflect on the factors that influence the performance of the approach. For example, a facial recognition algorithm may perform poorly when image resolution is low or images are taken in low lighting. Or a speech-to-text system might not be used reliably to provide closed captions for online lectures because it fails to handle technical jargon.
        \item The authors should discuss the computational efficiency of the proposed algorithms and how they scale with dataset size.
        \item If applicable, the authors should discuss possible limitations of their approach to address problems of privacy and fairness.
        \item While the authors might fear that complete honesty about limitations might be used by reviewers as grounds for rejection, a worse outcome might be that reviewers discover limitations that aren't acknowledged in the paper. The authors should use their best judgment and recognize that individual actions in favor of transparency play an important role in developing norms that preserve the integrity of the community. Reviewers will be specifically instructed to not penalize honesty concerning limitations.
    \end{itemize}

\item {\bf Theory assumptions and proofs}
    \item[] Question: For each theoretical result, does the paper provide the full set of assumptions and a complete (and correct) proof?
    \item[] Answer: \answerYes{} 
    \item[] Justification: In the Appendix we provide the complete proofs.
    \item[] Guidelines:
    \begin{itemize}
        \item The answer NA means that the paper does not include theoretical results. 
        \item All the theorems, formulas, and proofs in the paper should be numbered and cross-referenced.
        \item All assumptions should be clearly stated or referenced in the statement of any theorems.
        \item The proofs can either appear in the main paper or the supplemental material, but if they appear in the supplemental material, the authors are encouraged to provide a short proof sketch to provide intuition. 
        \item Inversely, any informal proof provided in the core of the paper should be complemented by formal proofs provided in appendix or supplemental material.
        \item Theorems and Lemmas that the proof relies upon should be properly referenced. 
    \end{itemize}

    \item {\bf Experimental result reproducibility}
    \item[] Question: Does the paper fully disclose all the information needed to reproduce the main experimental results of the paper to the extent that it affects the main claims and/or conclusions of the paper (regardless of whether the code and data are provided or not)?
    \item[] Answer: \answerYes{} 
    \item[] Justification: We provide full experimental details for reproducing the main results in \cref{sec:experiments} and Appendix.
    \item[] Guidelines:
    \begin{itemize}
        \item The answer NA means that the paper does not include experiments.
        \item If the paper includes experiments, a No answer to this question will not be perceived well by the reviewers: Making the paper reproducible is important, regardless of whether the code and data are provided or not.
        \item If the contribution is a dataset and/or model, the authors should describe the steps taken to make their results reproducible or verifiable. 
        \item Depending on the contribution, reproducibility can be accomplished in various ways. For example, if the contribution is a novel architecture, describing the architecture fully might suffice, or if the contribution is a specific model and empirical evaluation, it may be necessary to either make it possible for others to replicate the model with the same dataset, or provide access to the model. In general. releasing code and data is often one good way to accomplish this, but reproducibility can also be provided via detailed instructions for how to replicate the results, access to a hosted model (e.g., in the case of a large language model), releasing of a model checkpoint, or other means that are appropriate to the research performed.
        \item While NeurIPS does not require releasing code, the conference does require all submissions to provide some reasonable avenue for reproducibility, which may depend on the nature of the contribution. For example
        \begin{enumerate}
            \item If the contribution is primarily a new algorithm, the paper should make it clear how to reproduce that algorithm.
            \item If the contribution is primarily a new model architecture, the paper should describe the architecture clearly and fully.
            \item If the contribution is a new model (e.g., a large language model), then there should either be a way to access this model for reproducing the results or a way to reproduce the model (e.g., with an open-source dataset or instructions for how to construct the dataset).
            \item We recognize that reproducibility may be tricky in some cases, in which case authors are welcome to describe the particular way they provide for reproducibility. In the case of closed-source models, it may be that access to the model is limited in some way (e.g., to registered users), but it should be possible for other researchers to have some path to reproducing or verifying the results.
        \end{enumerate}
    \end{itemize}

\item {\bf Open access to data and code}
    \item[] Question: Does the paper provide open access to the data and code, with sufficient instructions to faithfully reproduce the main experimental results, as described in supplemental material?
    \item[] Answer: \answerYes{} 
    \item[] Justification: We will release our data, models and codebase in the final version of the paper.
    \item[] Guidelines:
    \begin{itemize}
        \item The answer NA means that paper does not include experiments requiring code.
        \item Please see the NeurIPS code and data submission guidelines (\url{https://nips.cc/public/guides/CodeSubmissionPolicy}) for more details.
        \item While we encourage the release of code and data, we understand that this might not be possible, so “No” is an acceptable answer. Papers cannot be rejected simply for not including code, unless this is central to the contribution (e.g., for a new open-source benchmark).
        \item The instructions should contain the exact command and environment needed to run to reproduce the results. See the NeurIPS code and data submission guidelines (\url{https://nips.cc/public/guides/CodeSubmissionPolicy}) for more details.
        \item The authors should provide instructions on data access and preparation, including how to access the raw data, preprocessed data, intermediate data, and generated data, etc.
        \item The authors should provide scripts to reproduce all experimental results for the new proposed method and baselines. If only a subset of experiments are reproducible, they should state which ones are omitted from the script and why.
        \item At submission time, to preserve anonymity, the authors should release anonymized versions (if applicable).
        \item Providing as much information as possible in supplemental material (appended to the paper) is recommended, but including URLs to data and code is permitted.
    \end{itemize}

\item {\bf Experimental setting/details}
    \item[] Question: Does the paper specify all the training and test details (e.g., data splits, hyperparameters, how they were chosen, type of optimizer, etc.) necessary to understand the results?
    \item[] Answer: \answerYes{} 
    \item[] Justification: We provide full details in \cref{sec:experiments} and Appendix.
    \item[] Guidelines:
    \begin{itemize}
        \item The answer NA means that the paper does not include experiments.
        \item The experimental setting should be presented in the core of the paper to a level of detail that is necessary to appreciate the results and make sense of them.
        \item The full details can be provided either with the code, in appendix, or as supplemental material.
    \end{itemize}

\item {\bf Experiment statistical significance}
    \item[] Question: Does the paper report error bars suitably and correctly defined or other appropriate information about the statistical significance of the experiments?
    \item[] Answer: \answerYes{} 
    \item[] Justification: We provide statistical significance information in Appendix.
    \item[] Guidelines:
    \begin{itemize}
        \item The answer NA means that the paper does not include experiments.
        \item The authors should answer "Yes" if the results are accompanied by error bars, confidence intervals, or statistical significance tests, at least for the experiments that support the main claims of the paper.
        \item The factors of variability that the error bars are capturing should be clearly stated (for example, train/test split, initialization, random drawing of some parameter, or overall run with given experimental conditions).
        \item The method for calculating the error bars should be explained (closed form formula, call to a library function, bootstrap, etc.)
        \item The assumptions made should be given (e.g., Normally distributed errors).
        \item It should be clear whether the error bar is the standard deviation or the standard error of the mean.
        \item It is OK to report 1-sigma error bars, but one should state it. The authors should preferably report a 2-sigma error bar than state that they have a 96\% CI, if the hypothesis of Normality of errors is not verified.
        \item For asymmetric distributions, the authors should be careful not to show in tables or figures symmetric error bars that would yield results that are out of range (e.g. negative error rates).
        \item If error bars are reported in tables or plots, The authors should explain in the text how they were calculated and reference the corresponding figures or tables in the text.
    \end{itemize}

\item {\bf Experiments compute resources}
    \item[] Question: For each experiment, does the paper provide sufficient information on the computer resources (type of compute workers, memory, time of execution) needed to reproduce the experiments?
    \item[] Answer: \answerYes{} 
    \item[] Justification: We provide sufficient information on the computer resources in \cref{sec:experiments} and Appendix.
    \item[] Guidelines:
    \begin{itemize}
        \item The answer NA means that the paper does not include experiments.
        \item The paper should indicate the type of compute workers CPU or GPU, internal cluster, or cloud provider, including relevant memory and storage.
        \item The paper should provide the amount of compute required for each of the individual experimental runs as well as estimate the total compute. 
        \item The paper should disclose whether the full research project required more compute than the experiments reported in the paper (e.g., preliminary or failed experiments that didn't make it into the paper). 
    \end{itemize}
    
\item {\bf Code of ethics}
    \item[] Question: Does the research conducted in the paper conform, in every respect, with the NeurIPS Code of Ethics \url{https://neurips.cc/public/EthicsGuidelines}?
    \item[] Answer: \answerYes{} 
    \item[] Justification: The research conducted in the paper conforms with the NeurIPS Code of Ethics.
    \item[] Guidelines:
    \begin{itemize}
        \item The answer NA means that the authors have not reviewed the NeurIPS Code of Ethics.
        \item If the authors answer No, they should explain the special circumstances that require a deviation from the Code of Ethics.
        \item The authors should make sure to preserve anonymity (e.g., if there is a special consideration due to laws or regulations in their jurisdiction).
    \end{itemize}

\item {\bf Broader impacts}
    \item[] Question: Does the paper discuss both potential positive societal impacts and negative societal impacts of the work performed?
    \item[] Answer: \answerYes{} 
    \item[] Justification: We provide discussion about potential societal impacts in Appendix.
    \item[] Guidelines:
    \begin{itemize}
        \item The answer NA means that there is no societal impact of the work performed.
        \item If the authors answer NA or No, they should explain why their work has no societal impact or why the paper does not address societal impact.
        \item Examples of negative societal impacts include potential malicious or unintended uses (e.g., disinformation, generating fake profiles, surveillance), fairness considerations (e.g., deployment of technologies that could make decisions that unfairly impact specific groups), privacy considerations, and security considerations.
        \item The conference expects that many papers will be foundational research and not tied to particular applications, let alone deployments. However, if there is a direct path to any negative applications, the authors should point it out. For example, it is legitimate to point out that an improvement in the quality of generative models could be used to generate deepfakes for disinformation. On the other hand, it is not needed to point out that a generic algorithm for optimizing neural networks could enable people to train models that generate Deepfakes faster.
        \item The authors should consider possible harms that could arise when the technology is being used as intended and functioning correctly, harms that could arise when the technology is being used as intended but gives incorrect results, and harms following from (intentional or unintentional) misuse of the technology.
        \item If there are negative societal impacts, the authors could also discuss possible mitigation strategies (e.g., gated release of models, providing defenses in addition to attacks, mechanisms for monitoring misuse, mechanisms to monitor how a system learns from feedback over time, improving the efficiency and accessibility of ML).
    \end{itemize}
    
\item {\bf Safeguards}
    \item[] Question: Does the paper describe safeguards that have been put in place for responsible release of data or models that have a high risk for misuse (e.g., pretrained language models, image generators, or scraped datasets)?
    \item[] Answer: \answerNA{} 
    \item[] Justification: The paper poses no such risks.
    \item[] Guidelines:
    \begin{itemize}
        \item The answer NA means that the paper poses no such risks.
        \item Released models that have a high risk for misuse or dual-use should be released with necessary safeguards to allow for controlled use of the model, for example by requiring that users adhere to usage guidelines or restrictions to access the model or implementing safety filters. 
        \item Datasets that have been scraped from the Internet could pose safety risks. The authors should describe how they avoided releasing unsafe images.
        \item We recognize that providing effective safeguards is challenging, and many papers do not require this, but we encourage authors to take this into account and make a best faith effort.
    \end{itemize}

\item {\bf Licenses for existing assets}
    \item[] Question: Are the creators or original owners of assets (e.g., code, data, models), used in the paper, properly credited and are the license and terms of use explicitly mentioned and properly respected?
    \item[] Answer: \answerYes{} 
    \item[] Justification: The benchmarks and data splits are publicly available. All licenses are respected.
    \item[] Guidelines:
    \begin{itemize}
        \item The answer NA means that the paper does not use existing assets.
        \item The authors should cite the original paper that produced the code package or dataset.
        \item The authors should state which version of the asset is used and, if possible, include a URL.
        \item The name of the license (e.g., CC-BY 4.0) should be included for each asset.
        \item For scraped data from a particular source (e.g., website), the copyright and terms of service of that source should be provided.
        \item If assets are released, the license, copyright information, and terms of use in the package should be provided. For popular datasets, \url{paperswithcode.com/datasets} has curated licenses for some datasets. Their licensing guide can help determine the license of a dataset.
        \item For existing datasets that are re-packaged, both the original license and the license of the derived asset (if it has changed) should be provided.
        \item If this information is not available online, the authors are encouraged to reach out to the asset's creators.
    \end{itemize}

\item {\bf New assets}
    \item[] Question: Are new assets introduced in the paper well documented and is the documentation provided alongside the assets?
    \item[] Answer: \answerYes{} 
    \item[] Justification: Assets will be released, and all instructions and details will be included for reproduction.
    \item[] Guidelines:
    \begin{itemize}
        \item The answer NA means that the paper does not release new assets.
        \item Researchers should communicate the details of the dataset/code/model as part of their submissions via structured templates. This includes details about training, license, limitations, etc. 
        \item The paper should discuss whether and how consent was obtained from people whose asset is used.
        \item At submission time, remember to anonymize your assets (if applicable). You can either create an anonymized URL or include an anonymized zip file.
    \end{itemize}

\item {\bf Crowdsourcing and research with human subjects}
    \item[] Question: For crowdsourcing experiments and research with human subjects, does the paper include the full text of instructions given to participants and screenshots, if applicable, as well as details about compensation (if any)? 
    \item[] Answer: \answerNA{} 
    \item[] Justification: The paper does not involve crowdsourcing nor research with human subjects.
    \item[] Guidelines:
    \begin{itemize}
        \item The answer NA means that the paper does not involve crowdsourcing nor research with human subjects.
        \item Including this information in the supplemental material is fine, but if the main contribution of the paper involves human subjects, then as much detail as possible should be included in the main paper. 
        \item According to the NeurIPS Code of Ethics, workers involved in data collection, curation, or other labor should be paid at least the minimum wage in the country of the data collector. 
    \end{itemize}

\item {\bf Institutional review board (IRB) approvals or equivalent for research with human subjects}
    \item[] Question: Does the paper describe potential risks incurred by study participants, whether such risks were disclosed to the subjects, and whether Institutional Review Board (IRB) approvals (or an equivalent approval/review based on the requirements of your country or institution) were obtained?
    \item[] Answer: \answerNA{} 
    \item[] Justification: The paper does not involve crowdsourcing nor research with human subjects.
    \item[] Guidelines:
    \begin{itemize}
        \item The answer NA means that the paper does not involve crowdsourcing nor research with human subjects.
        \item Depending on the country in which research is conducted, IRB approval (or equivalent) may be required for any human subjects research. If you obtained IRB approval, you should clearly state this in the paper. 
        \item We recognize that the procedures for this may vary significantly between institutions and locations, and we expect authors to adhere to the NeurIPS Code of Ethics and the guidelines for their institution. 
        \item For initial submissions, do not include any information that would break anonymity (if applicable), such as the institution conducting the review.
    \end{itemize}

\item {\bf Declaration of LLM usage}
    \item[] Question: Does the paper describe the usage of LLMs if it is an important, original, or non-standard component of the core methods in this research? Note that if the LLM is used only for writing, editing, or formatting purposes and does not impact the core methodology, scientific rigorousness, or originality of the research, declaration is not required.
    \item[] Answer: \answerNA{} 
    \item[] Justification: The core method development in this research does not involve LLMs as any important, original, or non-standard components.
    \item[] Guidelines:
    \begin{itemize}
        \item The answer NA means that the core method development in this research does not involve LLMs as any important, original, or non-standard components.
        \item Please refer to our LLM policy (\url{https://neurips.cc/Conferences/2025/LLM}) for what should or should not be described.
    \end{itemize}

\end{enumerate}

\newpage
\appendix
\onecolumn


\section{Proofs}
\subsection{Proof of Morse Flow Construction}
\label{proof morse flow construction}
\begin{proposition}[Morse Flow Construction]
\label{thm:main-generalized}
Let \( B \subset \mathbb{R}^n \) be a bounded open set with smooth boundary, and let \( j: B \to \mathbb{R} \) be a smooth Morse function that extends to \( C^\infty(\overline{B}) \). There exists \( \varepsilon > 0 \), a smooth vector field \( X \in \mathfrak{X}(B) \), and a unique smooth flow
\[
\Phi: B \times [0,\varepsilon] \to B
\]
satisfying:
\begin{align*}
\frac{d}{dt} \Phi(x,t) &= X(\Phi(x,t)), \\
\Phi(x,0) &= x,
\end{align*}
such that:
\begin{enumerate}
\item \( j(\Phi(x,\varepsilon)) \geq j(x) \) for all \( x \in B \)
\item Each \( \Phi(\cdot,t): B \to B \) is a diffeomorphism
\item Trajectories remain bounded away from \( \partial B \) for \( t \in [0,\varepsilon] \)
\end{enumerate}
\end{proposition}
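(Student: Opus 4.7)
The plan is to construct the vector field as $X = \alpha \nabla j$ where $\alpha$ is a smooth cutoff that vanishes in a collar neighborhood of $\partial B$, then verify the three properties via standard ODE theory. The Morse hypothesis on $j$ plays essentially no role in the statement as written here; we only use $j \in C^\infty(\overline{B})$. (The Morse property would be invoked for the auxiliary ``strictly almost everywhere'' claim in Proposition~\ref{morse flow construction} of the body, via isolated nondegenerate critical points.)

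First, I would build $\alpha \in C^\infty(\overline{B},[0,1])$ supported in the interior. Since $\partial B$ is a smooth compact hypersurface, the tubular neighborhood theorem gives some $\eta > 0$ and a diffeomorphism between a collar $N_\eta = \{x \in \overline{B} : d(x,\partial B) < \eta\}$ and $\partial B \times [0,\eta)$; on this collar the signed distance function is smooth. Let $\chi : [0,\infty) \to [0,1]$ be a smooth nondecreasing function with $\chi(s) = 0$ for $s \leq \eta/3$ and $\chi(s) = 1$ for $s \geq 2\eta/3$, and define
\[
\alpha(x) \;=\; \begin{cases} \chi(d(x,\partial B)) & x \in N_\eta, \\ 1 & x \in \overline{B}\setminus N_\eta. \end{cases}
\]
This is genuinely smooth on $\overline{B}$ because $d(\cdot,\partial B)$ is smooth on $N_\eta$ and $\chi$ is constant where the collar is glued to the interior. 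Set $X := \alpha \, \nabla j$; this is a smooth vector field on $\overline{B}$ whose support $K := \overline{\{\alpha > 0\}}$ is a compact subset of $B$ bounded away from $\partial B$ by at least $\eta/3$.

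Next, extend $X$ by zero to $\mathbb{R}^n$ (still smooth since $X$ vanishes in $N_{\eta/3}$). By Picard--Lindel\"{o}f the extended ODE has a unique smooth global flow $\Phi : \mathbb{R}^n \times \mathbb{R} \to \mathbb{R}^n$; restrict it to $B \times [0,\varepsilon]$ for any $\varepsilon > 0$. Property 3 follows because orbits starting in $N_{\eta/3} \cap B$ are stationary (there $X = 0$), while orbits starting outside $N_{\eta/3}$ may only evolve within the compact set $K$; either way every trajectory stays at distance $\geq \eta/3$ from $\partial B$ on all of $[0,\varepsilon]$. Property 2 is standard: the time-$t$ map of a smooth complete vector field is a diffeomorphism of $\mathbb{R}^n$, and since $\Phi_t$ preserves $B$ (by Property 3) the restriction is a diffeomorphism of $B$. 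Property 1 is the one-line computation
\[
\frac{d}{dt} j(\Phi(x,t)) \;=\; \bigl\langle \nabla j(\Phi(x,t)),\, X(\Phi(x,t))\bigr\rangle \;=\; \alpha(\Phi(x,t))\,\bigl|\nabla j(\Phi(x,t))\bigr|^2 \;\geq\; 0,
\]
which integrates to $j(\Phi(x,\varepsilon)) \geq j(x)$.

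The main obstacle I anticipate is purely a bookkeeping one, namely verifying that $\alpha$ can be made $C^\infty$ on all of $\overline{B}$ (the distance function to $\partial B$ is only Lipschitz on all of $\overline{B}$, but is smooth on $N_\eta$, which is exactly where we use it). Everything else is an immediate application of Picard--Lindel\"{o}f plus the elementary observation that a vector field compactly supported in $B$ produces a flow that trivially preserves $B$. Uniqueness of the flow follows from uniqueness of smooth ODE solutions. No hypothesis on $j$ beyond smoothness up to the boundary is used, so the construction is robust to the particular choice $j = -\mathbb{CE}(f(\cdot),c)$ that appears in the body.
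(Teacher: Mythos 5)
Your construction is correct, and it is genuinely simpler than the paper's. Both proofs take the form $X = (\text{cutoff})\cdot\nabla j$, but you choose a cutoff $\alpha$ that vanishes \emph{identically} in a collar $\{d(\cdot,\partial B) \le \eta/3\}$, so $X$ is compactly supported in $B$; boundary avoidance then follows from the trivial observation that a smooth trajectory starting at a non-fixed point can never reach the fixed-point set $\{d = \eta/3\}$ (backward uniqueness of ODE solutions), and all three conclusions drop out of Picard--Lindel\"of for a globally defined compactly supported field. The paper instead uses a polynomially \emph{decaying} boundary factor $\mu(x)^m$ that is strictly positive in the interior, which forces it to run a Gr\"onwall/comparison argument $\dot r \le C r^{m+1}$ to rule out finite-time escape to the boundary; it also inserts an additional cutoff $\eta$ vanishing near the critical points of $j$, which (as you correctly observe) contributes nothing to the three stated properties here --- the Morse hypothesis is only relevant to the stronger ``strict increase almost everywhere'' claim in the body's Proposition~\ref{morse flow construction}, not to this appendix version. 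What your approach buys is a shorter proof with no comparison lemma; what the paper's approach buys is a vector field that is nonzero up to (but not on) $\partial B$, which could be useful if one later wants the flow to act nontrivially near the boundary --- not needed for the result as stated. The one spot you should make fully explicit is the sentence ``orbits starting outside $N_{\eta/3}$ may only evolve within $K$'': the clean justification is that $\{d \le \eta/3\}$ consists entirely of equilibria, so if some trajectory hit that set at a finite time it would, by uniqueness, be the constant trajectory and hence have started there, contradiction. With that one-line patch the argument is complete.
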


\begin{proof}[Constructive Proof]
We proceed through coordinated geometric and analytic constructions.

\textbf{Step 1: Geometric Preparations}

\begin{enumerate}
\item \textit{Smooth Defining Function}: By the smooth boundary assumption, there exists \( \mu \in C^\infty(\overline{B}, [0,\infty)) \) with:
\begin{itemize}
\item \( \mu^{-1}(0) = \partial B \)
\item \( \nabla \mu(x) \neq 0 \) for \( x \in \partial B \)
\item \( \mu(x) \sim \text{dist}(x, \partial B) \) near \( \partial B \)
\end{itemize}
For explicit construction, take \( \mu(x) = f(\text{dist}(x, \partial B)) \) where \( f \in C^\infty([0,\infty)) \) satisfies \( f(r) = r \) near 0.

\item \textit{Critical Point Isolation}: Since \( j \) is Morse on compact \( \overline{B} \), its critical points \( \mathscr{C}(j) = \{p_1,\ldots,p_N\} \) are finite and non-degenerate. Choose pairwise disjoint neighborhoods \( U_i \ni p_i \) with:
\[
\overline{U_i} \subset B \setminus \partial B \quad \text{and} \quad \overline{U_i} \cap \mathscr{C}(j) = \{p_i\}
\]
\end{enumerate}

\textbf{Step 2: Vector Field Construction}

\begin{enumerate}
\item \textit{Partition of Unity}: Let \( \{\rho_i\}_{i=1}^N \) be smooth functions with:
\[
\text{supp}(\rho_i) \subset U_i, \quad 0 \leq \rho_i \leq 1, \quad \sum_{i=1}^N \rho_i \leq 1
\]
Define the cutoff function:
\[
\eta(x) := 1 - \sum_{i=1}^N \rho_i(x)
\]
Note \( \eta \equiv 0 \) near critical points and \( \eta \equiv 1 \) outside \( \bigcup U_i \).

\item \textit{Decay Modulation}: Fix \( m \geq n+1 \). Define the boundary decay factor:
\[
\mu_m(x) := \mu(x)^m
\]
This ensures sufficient regularity at \( \partial B \).

\item \textit{Synthesized Vector Field}: Define
\[
X(x) := \eta(x)\mu_m(x)\nabla j(x)
\]
This field vanishes at critical points and near \( \partial B \).
\end{enumerate}

\textbf{Step 3: Flow Analysis}

\textit{Boundary Avoidance}: For \( x \in B \), let \( r(t) = \mu(\Phi(x,t)) \). Compute:
\[
\frac{dr}{dt} = \nabla \mu(\Phi) \cdot X(\Phi) = \eta(\Phi)\mu_m(\Phi)\nabla \mu(\Phi) \cdot \nabla j(\Phi)
\]
    Using \( |\nabla \mu \cdot \nabla j| \leq C \) near \( \partial B \):
\[
\left|\frac{dr}{dt}\right| \leq C\eta(\Phi)\mu(\Phi)^{m+1} \leq Cr(t)^{m+1}
\]
Solutions to \( \dot{r} \leq Cr^{m+1} \) satisfy it will never reach 0 in finite time, establishing boundary avoidance.

\textbf{Step 4: Monotonicity \& Diffeomorphism}

\begin{enumerate}
\item \textit{Energy Gain}: Along trajectories:
\[
\frac{d}{dt}j(\Phi(x,t)) = \nabla j(\Phi) \cdot X(\Phi) = \eta(\Phi)\mu_m(\Phi)\|\nabla j(\Phi)\|^2 \geq 0
\]
Thus \( j \) is non-decreasing, with strict increase except at critical points.

\item \textit{Flow Diffeomorphisms}: The differential \( D\Phi(x,t) \) satisfies:
\[
\frac{d}{dt}D\Phi(x,t) = DX(\Phi(x,t))D\Phi(x,t)
\]
Since \( X \) is smooth with bounded derivatives on \( \overline{B} \), Grönwall's inequality gives:
\[
\|D\Phi(x,t) \| \leq \exp\left(\int_1^{1+\varepsilon} \|DX(\Phi(x,s))\| ds\right) < \infty
\]
Thus \( \Phi(\cdot,t) \) remains locally diffeomorphic, and properness follows from boundary avoidance.
\end{enumerate}

\textbf{Step 5: Isotopy Synthesis}

The time-\( \varepsilon \) map \( \Phi(0,\varepsilon) \) provides the required isotopy through diffeomorphisms.
\end{proof}
\subsection{Proof of Cascading Improvement at Adjoint Timesteps}
\label{proof cascading improvement}
\begin{proposition}[Cascading Improvement at Adjoint Timesteps]
\label{cascading_improvement_proof}

Consider two consecutive timesteps  $t, t - \delta$. Following Algorithm~\ref{alg:training}, when comparing the cases with and without updating  $\theta$  at  $t$, updating  $\theta$  results in an equal or lower cross-entropy for  $\widehat{\mathbf{x}_0}$ at $t-\delta$ when  $\delta$  is sufficiently small and all functions are smooth.
\end{proposition}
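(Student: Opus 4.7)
The plan is to treat the single gradient step at timestep $t$ as an infinitesimal update and show that, when $\delta$ is small, the gradient of the cross-entropy at timestep $t-\delta$ is nearly parallel to the gradient at timestep $t$, so a small step along $-\nabla_\theta L_{t\delta}$ must decrease $L_{(t-1)\delta}$ as well. Concretely, set
\begin{equation*}
L_s(\theta; \mathbf{x}) \;=\; \mathbb{CE}\bigl(f\bigl(\operatorname{clip}_\epsilon(\mathbf{x} - s\,\mathbf{v}_\theta(\mathbf{x}, s, c))\bigr), c\bigr),
\end{equation*}
and let $\theta^{\mathrm{old}}$ be the parameter value at the start of the inner iteration indexed by $t$ and $\theta^{\mathrm{new}} = \theta^{\mathrm{old}} - l_r \nabla_\theta L_{t\delta}(\theta^{\mathrm{old}}; \mathbf{x}_{t\delta})$. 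A direct reading of Algorithm~\ref{alg:training} shows that $\mathbf{x}_{(t-1)\delta} = \mathbf{x}_{t\delta} - \delta\,\mathbf{v}_{\theta^{\mathrm{old}}}(\mathbf{x}_{t\delta}, t\delta, c)$ is computed \emph{before} the update to $\theta$, so the iterate $\mathbf{x}_{(t-1)\delta}$ is identical in the ``update'' and ``no update'' branches; only the $\theta$ used inside $L_{(t-1)\delta}(\,\cdot\,; \mathbf{x}_{(t-1)\delta})$ differs. Thus the quantity to control is
\begin{equation*}
\Delta \;=\; L_{(t-1)\delta}(\theta^{\mathrm{new}}; \mathbf{x}_{(t-1)\delta}) \;-\; L_{(t-1)\delta}(\theta^{\mathrm{old}}; \mathbf{x}_{(t-1)\delta}).
\end{equation*}

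The next step is a joint Taylor expansion in $(\delta, l_r)$. Under the stated smoothness hypothesis, $L_s(\theta; \mathbf{x})$ is $C^1$ jointly in $(s, \mathbf{x}, \theta)$ on the relevant compact region (away from the clipping boundary, where gradient flow is active). Because $\mathbf{x}_{(t-1)\delta} - \mathbf{x}_{t\delta} = O(\delta)$ and $(t-1)\delta - t\delta = -\delta$, continuity of the gradient map $\theta \mapsto \nabla_\theta L_s(\theta; \mathbf{x})$ in $(s, \mathbf{x})$ yields
\begin{equation*}
\nabla_\theta L_{(t-1)\delta}(\theta^{\mathrm{old}}; \mathbf{x}_{(t-1)\delta}) \;=\; \nabla_\theta L_{t\delta}(\theta^{\mathrm{old}}; \mathbf{x}_{t\delta}) + O(\delta).
\end{equation*}
Applying a first-order expansion around $\theta^{\mathrm{old}}$ and substituting the update rule gives
\begin{equation*}
\Delta \;=\; -\,l_r\,\bigl\langle \nabla_\theta L_{(t-1)\delta}(\theta^{\mathrm{old}}; \mathbf{x}_{(t-1)\delta}),\; \nabla_\theta L_{t\delta}(\theta^{\mathrm{old}}; \mathbf{x}_{t\delta}) \bigr\rangle + O(l_r^2),
\end{equation*}
and plugging in the $O(\delta)$ estimate for the first gradient yields
\begin{equation*}
\Delta \;=\; -\,l_r\,\bigl\|\nabla_\theta L_{t\delta}(\theta^{\mathrm{old}}; \mathbf{x}_{t\delta})\bigr\|^2 + O(l_r\,\delta) + O(l_r^2).
\end{equation*}
Hence, for $\delta$ and $l_r$ taken small enough (the learning rate in the algorithm may be shrunk if needed; the statement only asks for sufficiently small $\delta$ with all other quantities fixed, so the $O(l_r \delta)$ term is what is controlled), we obtain $\Delta \le 0$, i.e. the cross-entropy at $t-\delta$ is no larger under the updated parameters. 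The degenerate case $\nabla_\theta L_{t\delta}(\theta^{\mathrm{old}}) = 0$ gives $\Delta = 0$, which is consistent with the ``equal or lower'' wording.

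\paragraph{Anticipated difficulty.} The only subtle point is the uniformity of the $O(\delta)$ estimate on $\nabla_\theta L_{(t-1)\delta} - \nabla_\theta L_{t\delta}$ in a neighborhood of $\theta^{\mathrm{old}}$: one needs joint continuity of the parameter gradient in $(s, \mathbf{x}, \theta)$, which in turn depends on $\mathbf{v}_\theta$ and $f$ being $C^1$ and on $\mathbf{x}_{t\delta}$ lying away from the measure-zero set where the coordinate-wise clip is non-differentiable. Invoking the ``all functions are smooth'' hypothesis (interpreted as treating the clip via its a.e.\ derivative, which is what the dynamic gradient clipping realizes) closes this gap. The rest of the argument is routine Taylor expansion and does not require grinding through closed-form calculations.
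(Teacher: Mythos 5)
Your proof is correct and follows essentially the same route as the paper's: fix $\mathbf{x}_{(t-1)\delta}$ (computed before the $\theta$-update), Taylor-expand the loss at $t-\delta$ to first order in the parameter increment, use continuity in $(s,\mathbf{x})$ to replace $\nabla_\theta L_{(t-1)\delta}$ by $\nabla_\theta L_{t\delta}$ up to $O(\delta)$, and conclude from the non-positive inner product. The only differences are presentational --- you expand the scalar loss $L_s$ directly rather than first through $\mathbf{v}_\theta$ and then through $\mathbb{CE}\circ f$, and you are more explicit about the clip's almost-everywhere differentiability and the joint smallness of $l_r$ and $\delta$ --- but the underlying argument is the same.
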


\begin{proof}
We want to show 
\[
\mathbb{CE}\bigl(f(\widehat{\mathbf{x}_0}^2), c\bigr)\; \le\; \mathbb{CE}\bigl(f(\widehat{\mathbf{x}_0}^1), c\bigr)
\]
for sufficiently small $\delta$ with the following statement:
\[
\widehat{\mathbf{x}_{0}}^1 
\,=\, \mathbf{x}_{t-\delta} \;-\; \mathbf{v}_\theta\bigl(\mathbf{x}_{t-\delta}, t-\delta\bigr)\,(t-\delta)
\]
and
\[
\widehat{\mathbf{x}_{0}}^2 
\,=\, \mathbf{x}_{t-\delta} \;-\; \mathbf{v}_{\theta+\Delta \theta}\bigl(\mathbf{x}_{t-\delta}, t-\delta\bigr)\,(t-\delta),
\]
where 
\[
\Delta \theta 
\,=\, -\,l_r \,\nabla_\theta\Bigl(\mathbb{CE}\bigl(f(\widehat{\mathbf{x}_0}^0), c\bigr)\Bigr), 
\quad
\widehat{\mathbf{x}_0}^0 
\,=\, \mathbf{x}_t \;-\; \mathbf{v}_\theta\bigl(\mathbf{x}_t, t\bigr)\,t.
\]

\noindent
\textbf{Step 1. Relating \(\widehat{\mathbf{x}_0}^1\) and \(\widehat{\mathbf{x}_0}^0\).}
Since \(\mathbf{x}_{t-\delta}\) is close to \(\mathbf{x}_t\) for small \(\delta\), the smoothness of \(\mathbf{v}_\theta(\cdot,\cdot)\) implies
\[
\|\widehat{\mathbf{x}_0}^1 \;-\; \widehat{\mathbf{x}_0}^0\| \;=\; 
\Bigl\|\bigl[\mathbf{x}_{t-\delta} - \mathbf{v}_\theta(\mathbf{x}_{t-\delta}, t-\delta)\,(t-\delta)\bigr]
\;-\;
\bigl[\mathbf{x}_t - \mathbf{v}_\theta(\mathbf{x}_t, t)\,t\bigr]\Bigr\|
\]
can be made arbitrarily small by taking \(\delta\) sufficiently small (and using continuity/Lipschitz arguments). Consequently, 
\[
\nabla_\theta \mathbb{CE}\bigl(f(\widehat{\mathbf{x}_0}^1), c\bigr)
\quad\text{and}\quad
\nabla_\theta \mathbb{CE}\bigl(f(\widehat{\mathbf{x}_0}^0), c\bigr)
\]
are also close for small \(\delta\).

\medskip
\noindent
\textbf{Step 2. First-order comparison at \(t-\delta\).}
By a first-order expansion of \(\mathbf{v}_{\theta+\Delta\theta}\) around \(\theta\) and the smoothness of \(\mathbf{v}_\theta(\cdot,\cdot)\), we have
\[
\mathbf{v}_{\theta+\Delta \theta}(\mathbf{x}_{t-\delta}, t-\delta)
\;=\;
\mathbf{v}_{\theta}(\mathbf{x}_{t-\delta}, t-\delta)
\;+\;
\nabla_\theta \mathbf{v}_{\theta}(\mathbf{x}_{t-\delta}, t-\delta)\,\Delta\theta
\;+\;\mathcal{O}\bigl(\|\Delta\theta\|^2\bigr).
\]
Hence,
\[
\widehat{\mathbf{x}_{0}}^2
\;-\;\widehat{\mathbf{x}_{0}}^1
\;=\;
-\Bigl[\mathbf{v}_{\theta+\Delta \theta}(\mathbf{x}_{t-\delta}, t-\delta) \;-\; \mathbf{v}_\theta(\mathbf{x}_{t-\delta}, t-\delta)\Bigr]\,(t-\delta)
\;\approx\;
-(t-\delta)\,\nabla_\theta \mathbf{v}_{\theta}\bigl(\mathbf{x}_{t-\delta}, t-\delta\bigr)\,\Delta\theta.
\]

\medskip
\noindent
\textbf{Step 3. Cross-entropy decrease.}
Using the smoothness of the cross-entropy and another first-order expansion,
\[
\begin{aligned}
&\mathbb{CE}\bigl(f(\widehat{\mathbf{x}_0}^2), c\bigr) 
\;-\;
\mathbb{CE}\bigl(f(\widehat{\mathbf{x}_0}^1), c\bigr)
\\
&\quad\approx\;
\bigl\langle
\nabla_{\widehat{\mathbf{x}_0}}\!\mathbb{CE}\bigl(f(\widehat{\mathbf{x}_0}^1), c\bigr),
\;\widehat{\mathbf{x}_0}^2 \;-\;\widehat{\mathbf{x}_0}^1
\bigr\rangle
\;+\;
\mathcal{O}\bigl(\|\widehat{\mathbf{x}_0}^2 - \widehat{\mathbf{x}_0}^1\|^2\bigr)
\\
&\quad\approx\;
\bigl\langle
\nabla_{\theta}\mathbb{CE}\bigl(f(\widehat{\mathbf{x}_0}^1), c\bigr),
\;\Delta \theta
\bigr\rangle
\;+\;\mathcal{O}\bigl(\|\Delta\theta\|^2,\|\delta\|\bigr).
\end{aligned}
\]
By definition of the gradient step \(\Delta \theta = -l_r\,\nabla_\theta \mathbb{CE}\bigl(f(\widehat{\mathbf{x}_0}^0), c\bigr)\) and the fact that 
\(\nabla_\theta \mathbb{CE}\bigl(f(\widehat{\mathbf{x}_0}^0), c\bigr)\) is close to 
\(\nabla_\theta \mathbb{CE}\bigl(f(\widehat{\mathbf{x}_0}^1), c\bigr)\) for small \(\delta\), the above inner product is non-positive up to higher-order (small) terms. Concretely,
\[
\bigl\langle
\nabla_{\theta}\mathbb{CE}\bigl(f(\widehat{\mathbf{x}_0}^1), c\bigr),
\;-l_r\, \nabla_\theta \mathbb{CE}\bigl(f(\widehat{\mathbf{x}_0}^0), c\bigr)
\bigr\rangle
\;\le\; 0
\]
when \(\delta\) is sufficiently small so that these gradients align (up to small errors). Therefore,
\[
\mathbb{CE}\bigl(f(\widehat{\mathbf{x}_0}^2), c\bigr) 
\;\le\; 
\mathbb{CE}\bigl(f(\widehat{\mathbf{x}_0}^1), c\bigr),
\]
which completes the proof.
\end{proof}

\section{Related Works}

\subsection{Targeted and Untargeted Attacks}

\paragraph{Targeted Attacks.} The objective of targeted attacks is to force the classifier to output a specified label. In other words, the attacker seeks to cause the model to produce incorrect classification results and aims for the result to be a specific target class. This type of attack is more hazardous due to its ability to manipulate the model's output precisely but is typically more challenging to execute.

\paragraph{Untargeted Attacks.} The goal of untargeted attacks is to make the classifier output any incorrect label. The attacker merely needs to mislead the model so that its classification result does not match the true label. Despite having lower requirements, untargeted attacks can still have severe consequences in certain situations.

\subsection{White-Box and Black-Box Attacks}

\paragraph{White-Box Attacks.} White-box attacks assume that the attacker has complete access to the target model, including its architecture, parameters, and gradient information. Using this information, the attacker can generate efficient adversarial examples through iterative optimization methods.

\paragraph{Black-Box Attacks.} Black-box attacks assume that the attacker does not have access to the internal information of the target model. A common method to implement black-box attacks is to utilize transferability, where adversarial examples are first generated against a known source model and then used to attack the unknown target model.

\subsection{Instance-Specific and Instance-Agnostic Attacks}

\paragraph{Instance-Specific Attacks.} Instance-specific attacks \cite{dong2018boosting, gao2021feature, eykholt2018robust, xiong2022stochastic, wang2021enhancing, lu2020enhancing, li2020yet} and \textit{instance-agnostic} generate adversarial perturbations for specific input samples. The attacker uses gradient information from the target model and iterative optimization algorithms to create minimal perturbations that achieve the attack on a given sample. Such attacks usually have high success rates on individual samples but lack generalization and transferability.

\paragraph{Instance-Agnostic Attacks.} Instance-agnostic attacks \cite{xiao2018generating, luo2021generating, kong2020physgan, naseer2019cross, naseer2021generating, feng2023dynamic} do not target specific input samples but instead learn universal adversarial perturbations or generative functions based on data distribution. These attack methods have better generalization across different samples, thus exhibiting stronger transferability.

\subsection{Subcategories of Instance-Agnostic Attacks}

Instance-agnostic attacks can be further subdivided into the following categories:

\paragraph{Universal Adversarial Perturbations.} These methods learn a universal perturbation \cite{moosavi2017universal, zhang2020understanding} applicable to the entire dataset. The classifier can be misled by superimposing this perturbation on any input sample.

\paragraph{Generative Models.} Generative attacks \cite{poursaeed2018generative, naseer2019cross} train a generator that, upon receiving an input sample, can produce specific adversarial perturbations. This approach often surpasses universal adversarial perturbations regarding flexibility and attack efficacy.

\subsection{Single-Target and Multi-Target Attacks}

\paragraph{Single-Target Attacks.} Single-target attacks train an individual generative model for each target class \cite{naseer2019cross, naseer2021generating, feng2023dynamic, wang2023towards}. Although these models achieve high success rates for single-target classes, the training cost becomes substantial when the number of target classes is large, thereby limiting practical usability.

\paragraph{Multi-Target Attacks.} Multi-target attacks simultaneously train the attack capabilities for multiple target classes within a single model \cite{han2019once, yang2022boosting, fang2025clip}. Class labels or text embeddings are typically used as conditional inputs to generate corresponding adversarial perturbations. This method significantly reduces training costs and enhances feasibility in real-world applications.

\section{Comparison with Other Diffusion-Based Methods}

Currently, diffusion models have been explored in several adversarial attack methods \cite{chen2023advdiffuser,xue2023diffusion,chen2023content,chen2024diffusion,dai2025advdiff}. Among them, ACA \cite{chen2023content} and DiffAttack \cite{chen2024diffusion} leverage DDIM inversion to obtain the latent space of diffusion models and optimize adversarial examples within this latent space. AdvDiffuser \cite{chen2023advdiffuser}, DiffPGD \cite{xue2023diffusion}, and AdvDiff \cite{dai2025advdiff} incorporate adversarial guidance during the reverse denoising process of diffusion models. Notably, although these methods are diffusion-based, they are all instance-specific attacks, requiring access to the target classifier’s gradient information during inference for each input sample to perform the attack.

In contrast, our method, once trained, does not require any further information from the classifier, enabling more efficient generation of adversarial examples. 

Furthermore, since these prior methods either only support untargeted attacks\cite{chen2023advdiffuser,xue2023diffusion,chen2023content,chen2024diffusion} or focus on unrestricted adversarial examples\cite{chen2023advdiffuser,chen2023content,chen2024diffusion,dai2025advdiff}, their settings are not directly compatible with ours, making direct comparisons infeasible.

\section{Method Details}
\subsection{Target Class Condition Representation}
For each label $c$ in the target label set $\mathcal{C}$, we first obtain its class description and format it into a text condition using the template "a photo of a \{class\}"\cite{radford2021learning}. Subsequently, we utilize CLIP's text encoder to derive this textual input's embedding $\mathbf{e}$. Finally, this embedding is fed into our model via cross-attention mechanisms:


\begin{equation}
\begin{split}
Q= \mathbf{z}W_{Q}, K=&\  \mathbf{e}W_{K}, V =  \mathbf{e}W_{V}, \\
Attention(Q,K,V)&=softmax(\frac{QK^{T}}{\sqrt{d}}) \cdot V,  
\end{split}
\label{eq:cross_attn}
\end{equation}

\noindent where $\mathbf{z} \in \mathbb{R}^{d_{z}}$ denotes the flattened intermediate features of the unet model, $W_{Q}\in \mathbb{R}^{d_{z}\times d}$, $W_{K}\in \mathbb{R}^{d_{e}\times d}$, $W_{V}\in \mathbb{R}^{d_{e}\times d}$ are learnable parameters.

By employing this approach, we can leverage the rich semantic priors associated with the target classes embedded in the pre-trained diffusion model, thereby facilitating a more effective training process.

\begin{algorithm}[]
   \caption{Single-Target Fine-Tuning Mechanism}
   \label{alg:single}
\begin{algorithmic}
   \STATE {\bfseries Input:} $\tau = N\delta$, stepsize $\delta$, model param. $\phi$, $\theta$,  model $f$, target label $c$, training dataset $\{\mathbf{I}^i\}_{i\in \mathcal{I}}$, learning rate $l_r$
   \REPEAT
   \FOR{$i\in \mathcal{I}$}
   \STATE get $\mathbf{x}_0 = \mathbf{I}^i$
   \FOR{$t = 1$ {\bfseries to} $N$}
   \STATE $\mathbf{x}_{t\delta} = \mathbf{x}_{(t-1)\delta} + \mathbf{v}_\phi(\mathbf{x}_{(t-1)\delta}, (t-1)\delta, \varnothing) \delta$
   \ENDFOR
   \FOR{$t = N$ {\bfseries to} $1$}
   \STATE $\mathbf{x}_{(t-1)\delta} = \mathbf{x}_{t\delta} - \mathbf{v}_\theta(\mathbf{x}_{t\delta}, t\delta, c) \delta$
   \STATE $\widehat{\mathbf{x}_0} = \mathbf{x}_{t\delta} - \mathbf{v}_\theta(\mathbf{x}_{t\delta}, t\delta, c) t \delta$
   \STATE get random mask $M$
   \STATE $\widehat{\mathbf{x}_0} = \mathbf{x}_0 + M \cdot (\widehat{\mathbf{x}_0} - \mathbf{x}_0)$
   \STATE $\widehat{\mathbf{x}_0} = \operatorname{clip}\left( \widehat{\mathbf{x}_0}, \mathbf{x} - \epsilon, \mathbf{x} + \epsilon \right)$
   \STATE $\theta = \theta - l_r \cdot \nabla_\theta(\mathbb{CE}(f(\widehat{\mathbf{x}_0}), c))$
   \ENDFOR
   \ENDFOR
   \UNTIL{$\mathbf{v}_\theta$ convergence}
   \STATE {\bfseries Return:} Dual-Flow $\{\mathbf{v}_\phi,\mathbf{v}_\theta\}$
\end{algorithmic}
\end{algorithm}

\subsection{Fine-Tuning on Single-Target Tasks}
\label{appendix:single_finetune}
We fine-tune our model for single-target tasks to enhance its performance further. Specifically, we fix the target label during training, enabling the model to focus on targeted attacks for a specific label. To mitigate the perturbations being confined to some areas of the image, which can reduce the robustness and transferability of adversarial examples in single-target training, we apply the mechanism introduced in \cite{fang2025clip}.

In detail, we generate a random mask $M$ of the same size as the image, where several randomly positioned square pixel areas are set to 0, and the rest are set to 1. By multiplying this mask with the perturbation, we ensure the generated adversarial samples remain consistent with the original image in the masked square areas. This forces the model to create adversarial patterns distributed across the entire image rather than being localized to specific regions, as illustrated in Algorithm \ref{alg:single}.

Like other single-target methods, we must fine-tune a separate model for each target. However, due to our model's powerful capabilities in multi-target attacks, once the model is trained on the multi-target task, it requires only a few additional steps to adapt to each single-target task. This results in significantly lower training overhead compared to other methods.

\section{Computational Cost}
For training the multi-target Dual-Flow (e.g., using Res-152 as the source model), we require approximately 24 hours of training on 2 NVIDIA RTX 3090 GPUs, which is equivalent to 48 GPU hours. To further fine-tune on single-target tasks, we need an additional 4 hours per class on a single NVIDIA RTX 3090 GPU, totaling 4 hours $\times$ 8 classes = 32 GPU hours. Once training is complete, our method only requires 15 minutes of inference on a single GPU to generate adversarial samples for 8 target classes across the entire ImageNet NeurIPS validation set.

\begin{table}[h]
  \caption{Training and inference time for Dual-Flow with Res-152 as the source model.}
  \label{tab:training_time}%
  \centering
  \vskip 0.15in
  \begin{small}
      \begin{tabular}{ccc}
      \toprule
        Train (only multi-target) & Train (multi-target + single-target) & Inference\\
      \midrule
        48 GPU hours & 80 GPU hours & 15 min\\
      \bottomrule
      \end{tabular}
  \end{small}
  \vskip -0.1in
\end{table}%

We highlight that once our method completes training, it can rapidly generate adversarial samples with high transferability and robustness without requiring any gradient information from the classifier models.

\section{More Experiments}

\paragraph{Evaluation on Transformer Models.}
We tested our attack method's success rate when transferred to transformer models. Specifically, we utilized Res152 as the source model. The results, included in Table \ref{tab:transformer}, demonstrate that despite the fundamental architectural differences between transformer models and our source model based on convolutional networks (Res152), our method maintains a high attack success rate, significantly outperforming baseline methods. This further corroborates the advantage of our method in terms of transferability.

\begin{table}[htbp]
  \caption{Attack success rates (\%) for multi-target attacks on transformer models. The source model is Res-152.}
  \label{tab:transformer}%
  \centering

  \begin{small}
  \resizebox{1.0\linewidth}{!}{
      \begin{tabular}{ccccccc}  
      \toprule
        Method   & ViT-B/16~\cite{wu2020visual}  & CaiT-S/24~\cite{Touvron_2021_ICCV}   & Visformer-S~\cite{chen2021visformer}   & DeiT-B~\cite{touvron2021training}  & LeViT-256~\cite{Graham_2021_ICCV} & TNT-S~\cite{han2021transformer}  \\
      \midrule
      C-GSP & 11.78 & 32.00  & 36.60 & 35.58 & 37.85 & 31.00\\
      CGNC & 19.46 & 54.56  & 58.70 & 59.90 & 57.53 & 48.40\\
      Dual-Flow & \textbf{36.39} & \textbf{74.24}  & \textbf{76.72} & \textbf{78.50} & \textbf{79.34} & \textbf{67.86}\\
      \bottomrule
      \end{tabular}
    }
  \end{small}
\end{table}%

\paragraph{Evaluation on DiffPure.}

We evaluate our attack method using Diffusion Models for Adversarial Purification (DiffPure)\cite{nie2022DiffPure}. The experimental results show the attack success rates of our method under various DiffPure $t^*$ settings and compare them with the baseline method. As illustrated in Table \ref{tab:diff_pure}, the baseline method is easily nullified by the purification process, whereas our method maintains a significant success rate. This further demonstrates the robustness of our approach.

\begin{table}[h]
  \centering
  \caption{Attack success rates (\%) for multi-target attacks on normally trained models with DiffPure. The source model is Res-152.}
 \label{tab:diff_pure}%
  \vskip 0.15in
  {
  \begin{small}
    \begin{tabular}{ccccccccc}
    \toprule 
    $t^*$   & Method   & Inc-v3   & Inc-v4   & Inc-Res-v2    & Res-152  & DN-121   & GoogleNet  & VGG-16\\
    \midrule
    \multirow{2}[0]{*}{0.05} 
             & CGNC     & 16.26  & 19.91  & 8.53   & 67.76  & 49.81  & 21.79  & 29.81 \\
             & Dual-Flow     & 49.60  & 51.92  & 37.56  & 79.50  & 70.20  & 51.30  & 52.26 \\
    \midrule
    \multirow{2}[0]{*}{0.10} 
             & CGNC     & 2.41   & 2.96   & 1.25   & 14.65  & 10.10  & 3.46   & 4.59  \\
             & Dual-Flow      & 24.31  & 25.32  & 18.76  & 48.05  & 39.81  & 25.06  & 24.78 \\
    \midrule
    \multirow{2}[0]{*}{0.15} 
             & CGNC     & 0.47   & 0.46   & 0.34   & 1.84   & 1.46   & 0.62   & 0.92  \\
             & Dual-Flow      & 7.20   & 7.87   & 5.89   & 16.70  & 13.72  & 7.71   & 8.34  \\
    \bottomrule
    \end{tabular}%

  \end{small}
  }
    \vskip -0.1in
\end{table}%

\begin{table}[h]
  \centering
  \caption{Attack success rates (\%) for multi-target attacks on normally trained models using the ImageNet validation set. The perturbation budget is constrained to $l_{\infty} \leq 16/255$. * indicates white-box attacks. The results are averaged across 8 different target classes, and the overall average on the far right is computed solely for black-box attacks.}
 \label{tab:imagenet_val}%
  \vskip 0.15in
  \resizebox{\linewidth}{!}{  
  \begin{small}
    \begin{tabular}{cccccccccc}
    \toprule 
    Source   & Method   & Inc-v3   & Inc-v4   & Inc-Res-v2    & Res-152  & DN-121   & GoogleNet       & VGG-16 & Average\\
    \midrule
    \multirow{2}[0]{*}{Inc-v3} 
             & CGNC     & 96.59$^*$    & 57.82    & 46.84    & 44.13    & 65.90    & 53.40    & 56.27 & 54.06\\
             & Dual-Flow & 89.89$^*$& \textbf{75.74}&\textbf{65.05}&\textbf{75.73}&\textbf{82.75}&\textbf{72.21}&\textbf{66.20}&\textbf{72.95}\\
             \midrule
    \multirow{2}[0]{*}{Res-152} 
             & CGNC     & 56.00    & 50.37    & 32.26    & 96.44$^*$    & 86.69    & 63.84    & 63.90 & 58.84\\
             & Dual-Flow &\textbf{69.75}&\textbf{72.53}&\textbf{54.11}&92.70$^*$&\textbf{86.71}&\textbf{74.08}&\textbf{68.22}&\textbf{70.90}\\
             \bottomrule
    \end{tabular}%

  \end{small}
  }
    \vskip -0.1in
\end{table}%

\paragraph{Transferability Evaluation On ImageNet Validation Set.}
In addition to the evaluation on the ImageNet-NeurIPS (1k) dataset\cite{Nips2017}, we conducted an assessment of our attack method on the ImageNet validation set (50k)\cite{deng2009imagenet} and compared it with the state-of-the-art multi-target attack method, CGNC\cite{fang2025clip}. The experimental results presented in Table \ref{tab:imagenet_val} indicate that our method achieves a significantly higher average black-box attack success rate than CGNC, demonstrating its superior transferability. This outcome is consistent with the results observed on the ImageNet-NeurIPS (1k) dataset.

\paragraph{Evaluation on Smaller Perturbation Budgets.}
To validate the effectiveness of our method when adversarial perturbations are more imperceptible, we conducted experiments at lower perturbation budgets for black-box targeted attacks. We used the $\epsilon=16/255$ versions of trained models (CGNC and Dual-Flow) and clipped the generated samples to meet the $\epsilon=12/255$ and $\epsilon=8/255$ limits. 
The results are shown in Tab~\ref{tab:lower_epsilon}. The results show that our method achieves better attack performance than the baseline across various perturbation budgets.

\begin{table}[h]
  \centering
  \caption{Attack success rates (\%) for multi-target attacks on normally trained models with different perturbation budgets. The source model is Inc-v3.}
  \label{tab:lower_epsilon}%
  \vskip 0.15in
  {
  \begin{small}
    \begin{tabular}{cccccccc}
    \toprule 
    $\epsilon$   & Method   &  Inc-v4   & Inc-Res-v2    & Res-152  & DN-121   & GoogleNet  & VGG-16\\
    \midrule
    \multirow{2}[0]{*}{16/255} 
             & CGNC  & 59.43 & 48.06 & 42.48 & 62.98 & 51.33 & 52.54 \\
             & Dual-Flow   & \textbf{77.19} & \textbf{66.76} & \textbf{77.06} & \textbf{82.64} & \textbf{73.01} & \textbf{67.09} \\
    \midrule
    \multirow{2}[0]{*}{12/255} 
             & CGNC  & 43.23 & 30.34 & 28.48 & 46.96 & 33.65 & 37.95\\
             & Dual-Flow   & \textbf{55.80} & \textbf{41.82} & \textbf{54.41} & \textbf{63.19} & \textbf{47.66} & \textbf{44.94}\\
    \midrule
    \multirow{2}[0]{*}{8/255} 
             & CGNC  & 13.33 & 6.54 & 6.95 & 14.65 & 7.51 & 10.29\\
             & Dual-Flow   & \textbf{23.86} & \textbf{14.04} & \textbf{21.21} & \textbf{27.68} & \textbf{13.39} & \textbf{15.55}\\
    \bottomrule
    \end{tabular}%
  \end{small}
  }
    \vskip -0.1in
\end{table}%

\paragraph{Evaluation on Single-Target Attacks against Robustly Trained Models.} We validate the effectiveness of our attack method in single-target settings against the robust models mentioned in the main text and compare it with other single-target attacks. As shown in Table~\ref{tab:single_robust}, the conclusions are similar to those in the multi-target case in the main text, demonstrating that our method can effectively mislead robustly trained classifiers.

\begin{table*}[h!]
  \centering
  \caption{Attack success rates (\%) for single-target attacks against robust models on ImageNet NeurIPS validation set. The perturbation budget $l_{\infty} \leq 16/255$. The results are averaged on 8 different target classes. The source model is Res-152.}
  \vskip 0.15in
  {
  \begin{small}
    \begin{tabular}{ccccccccc}
    \toprule
    Method   & $\textrm{Inc-v3}_\textrm{adv}$ & $\textrm{IR-v2}_\textrm{ens}$ & $\textrm{Res50}_\textrm{SIN}$ & $\textrm{Res50}_\textrm{IN}$ & $\textrm{Res50}_\textrm{fine}$ & $\textrm{Res50}_\textrm{Aug}$ \\
    \midrule
    GAP      & 5.72 & 4.51 & 7.33 & 71.04 & 83.64 & 52.07 \\
    CD-AP    & 3.77 & 6.48 & 7.09 & 63.72 & 76.79 & 49.67 \\
    TTP      &27.99 &26.08 &24.61 & 72.47 & 74.51 & 70.96 \\
    DGTA-PI  &31.10 &30.07 &27.70 & 77.13 & 80.55 & 76.78 \\
    CGNC$^{\dagger}$  & 31.55 & 33.63 & 33.31 & \textbf{88.34} & \textbf{89.74} & 72.96 \\
    Dual-Flow$^{\dagger}$ & \textbf{47.32} & \textbf{56.22} & \textbf{60.66} & 84.15 & 85.18 & \textbf{78.56} \\
    \bottomrule
    \end{tabular}%
  \label{tab:single_robust}%
  \end{small}}
  \vskip -0.1in
\end{table*}%

\section{Victim Model Details}
All the victim models we used employ the official weights provided for the 1K-class ImageNet dataset~\cite{deng2009imagenet}. For normally trained models (including transformer models), we directly call these models and their weights through the torchvision~\cite{torchvision2016} or timm~\cite{rw2019timm} libraries. For robust models (adversarially trained models):
\begin{itemize}
    \item $\textrm{Inc-v3}_\textrm{adv}$ and $\textrm{IR-v2}_\textrm{ens}$, we call their official weights through the timm library.
    \item $\textrm{Res50}_\textrm{SIN}$, $\textrm{Res50}_\textrm{IN}$, and $\textrm{Res50}_\textrm{fine}$, we use the weights provided by the open-source repository Texture-vs-Shape~\cite{geirhos2018imagenet}.
    \item $\textrm{Res50}_\textrm{Aug}$, we use the weights provided by the open-source repository AugMix~\cite{hendrycks2019augmix}.
\end{itemize}

We report the accuracy of these models on the ImageNet NeurIPS validation set in Tab~\ref{tab:victim_accuracy}.

\begin{table}[h!]
  \caption{Accuracy (\%) of victim models on ImageNet NeurIPS validation set.}
  \label{tab:victim_accuracy}%
  \centering
  \vskip 0.15in
  \begin{small}
    \resizebox{\linewidth}{!}{
    \begin{tabular}{lccccccc}
    \toprule 
    Type   & \multicolumn{7}{c}{Model \& Accuracy}\\
    \midrule
    \multirow{2}[0]{*}{Normal} 
           & Inc-v3 &Inc-v4   & Inc-Res-v2    & Res-152  & DN-121   & GoogleNet  & VGG-16\\
           & 95.1 & 94.7  & 97.3 & 94.5 & 90.7 & 87.0 & 87.2 \\
    \midrule
    \multirow{2}[0]{*}{Robust} 
    & $\textrm{Inc-v3}_\textrm{adv}$ & $\textrm{IR-v2}_\textrm{ens}$ & $\textrm{Res50}_\textrm{SIN}$ & $\textrm{Res50}_\textrm{IN}$ & $\textrm{Res50}_\textrm{fine}$ & $\textrm{Res50}_\textrm{Aug}$ &\\
           & 86.7 & 94.5  & 68.4 & 89.5 & 93.2 & 94.1 & \\
    \midrule
    \multirow{2}[0]{*}{Transformer} 
           & ViT-B/16   & CaiT-S/24   & Visformer-S    & DeiT-B  & LeViT-256   & TNT-S      & \\
           & 84.4 & 98.1  & 96.5 & 96.9 & 94.7 & 91.3 & \\
    \bottomrule
    \end{tabular}%
    }
  \end{small}
  \vskip -0.1in
\end{table}%

\section{More Ablation Studies}
\paragraph{Ablation on LoRA Fine-tuning.}
To demonstrate that the adversarial attack capability of our model stems from LoRA fine-tuning, we evaluated the model’s performance when performing attacks directly without applying LoRA. As shown in Table \ref{lora_ablation}, the original model fails to achieve effective adversarial attacks, which validates the necessity of incorporating LoRA for fine-tuning.

\begin{table}[h]
  \caption{Attack success rates comparing the model with and without LoRA. The source model is Res-152. }
  \label{lora_ablation}%
  \centering
  \vskip 0.15in
  \begin{small}
      \begin{tabular}{cccccccc}  
      \toprule
        Method   & Inc-v3   & Inc-v4   & Inc-Res-v2    & Res-152  & DN-121   & GoogleNet      & VGG-16\\
      \midrule
      w/o LoRA & 0.075 & 0.075  & 0.05 & 0.05 & 0.075 & 0.05 & 0.075 \\
      w/ LoRA & 69.58 & 71.92  & 56.10 & 92.39 & 85.73 & 73.65 & 67.59 \\
      \bottomrule
      \end{tabular}
  \end{small}
  \vskip -0.1in
\end{table}%

\paragraph{Ablation on Dynamic Gradient Clipping.}
To evaluate the effect of dynamic gradient clipping, we designed a variant that does not apply dynamic clipping during training and only clips the outputs to satisfy the norm constraints during inference. As shown in Table XXX, if the model is not trained with clipping, it fails to adapt to the norm constraints at inference time and thus cannot perform effective attacks.

\begin{table}[h]
  \caption{Attack success rates comparing the model with and without Dynamic Gradient Clipping. The source model is Res-152. }
  \label{clip_ablation}%
  \centering
  \vskip 0.15in
  \begin{small}
      \begin{tabular}{cccccccc}  
      \toprule
        Method   & Inc-v3   & Inc-v4   & Inc-Res-v2    & Res-152  & DN-121   & GoogleNet      & VGG-16\\
      \midrule
      w/o clip & 1.85 & 2.32  & 1.96 & 2.84 & 3.51 & 1.82 & 1.19 \\
      w/ clip & 69.58 & 71.92  & 56.10 & 92.39 & 85.73 & 73.65 & 67.59 \\
      \bottomrule
      \end{tabular}
  \end{small}
  \vskip -0.1in
\end{table}%

\paragraph{Ablation on Loss.}
We designed a variant: during training, we use a new $L_2$ loss function to make the model's output as close as possible to the original ODE trajectory, ensuring it does not deviate too far from the original ODE flow. We call this variant Dual-Flow-$L_2$. The experimental results in Table \ref{l2_ablation} indicate that the attack capability of this variant is not ideal, as described in Section \ref{l2_v}.

\begin{table}[htbp]
  \caption{Comparison of Dual-Flow and Dual-Flow-$L_2$. The source model is Res-152. }
  \label{l2_ablation}%
  \centering
  \vskip 0.15in
  \begin{small}
    \resizebox{\linewidth}{!}{  
      \begin{tabular}{ccccccccc}  
      \toprule
        Method   & Inc-v3   & Inc-v4   & Inc-Res-v2    & Res-152  & DN-121   & GoogleNet       & VGG-16 & Average\\
      \midrule
      Dual-Flow & 69.58 & 71.92  & 56.10 & 92.39 & 85.73 & 73.65 & 67.59 & 70.76\\
      Dual-Flow-$L_2$ & 54.90 & 56.26 & 42.94 & 86.80  & 78.41 & 57.71 & 46.36 & 56.10 \\
      \bottomrule
      \end{tabular}
    }
  \end{small}
  \vskip -0.1in
\end{table}%

\section{More Analysis}

\begin{figure}[h]
\vskip 0.2in
\begin{center}
\centerline{\includegraphics[width=0.5\linewidth]{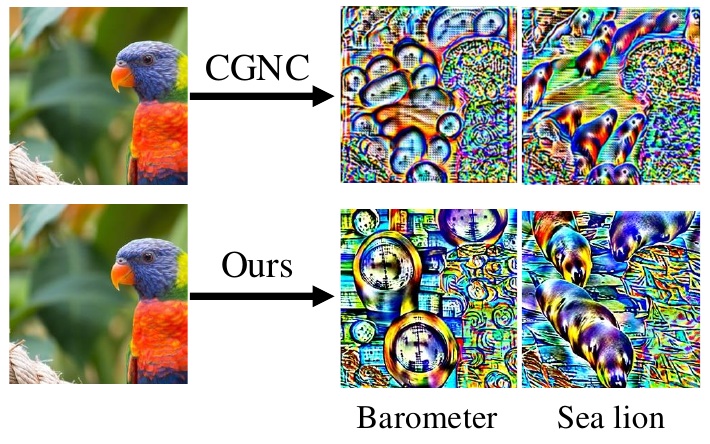}}
\caption{Visualization results comparing the adversarial perturbations generated by our method with those produced by CGNC.}

\label{fig:compare}
\end{center}
\vskip -0.2in
\end{figure}

\begin{table}[h]
  \caption{Comparison of different types of adversarial inputs. CGNC-P and Dual-Flow-P represent the adversarial perturbations generated by CGNC and our method, respectively, while Dual-Flow-A denotes the unclipped adversarial samples produced by our method. The adversarial perturbations are scaled to a range between 0 and 1 before input into the classifier.}
  \label{pert_noclip}%
  \centering
  \vskip 0.15in
  \begin{small}
    \resizebox{\linewidth}{!}{
    \begin{tabular}{ccccccccc}
    \toprule
    Source   & Method   & Inc-v3   & Inc-v4   & Inc-Res-v2 & Res-152   & DN-121   & GoogleNet & VGG-16\\
    \midrule
    \multirow{3}[0]{*}{Inc-v3} 
             & CGNC-P  & 56.80$^{*}$    & 19.15    & 22.06    & 10.56    & 13.14    & 14.56    & 3.41\\
             & Dual-Flow-P         & 86.55$^{*}$ & 81.20 & 74.55 & 74.55 & 70.66 & 76.15 & 55.10\\
             & Dual-Flow-A & 99.12$^{*}$ & 95.31 & 92.79 & 97.39 & 96.80 & 95.62 & 87.95\\
             \midrule
    \multirow{3}[0]{*}{Res-152} 
             & CGNC-P  & 23.61    & 25.72    & 39.07    & 58.29$^{*}$    & 39.09    & 37.47    & 17.21\\
             & Dual-Flow-P             & 68.44 & 81.48 & 78.26 & 86.29$^{*}$ & 80.44 & 74.59 & 55.35\\
             & Dual-Flow-A & 94.38 & 96.60 & 94.62 & 99.19$^{*}$ & 97.92 & 93.54 & 90.85\\
             \bottomrule
    \end{tabular}%
    }
  \end{small}
  \vskip -0.1in
\end{table}%

\paragraph{Semantic Adversarial Attack.}

We compared the adversarial perturbations generated by our method and those produced by the state-of-the-art multi-target attack method, CGNC\cite{fang2025clip}. As illustrated in Figure \ref{fig:compare}, the visual results indicate that while CGNC's perturbations contain some semantic features of the target class, they are primarily confined to small, repetitive patterns. In contrast, our method generates perturbations that are semantically more representative of the complete target class.

To further validate this observation, we directly input the adversarial perturbations generated by CGNC and our method into the target classifier. As shown in Table \ref{pert_noclip}, our adversarial perturbations alone can induce the classifier to predict the target class with a relatively high probability. Conversely, the perturbations produced by CGNC exhibit a lower likelihood, particularly when transferred to black-box models. This demonstrates that our perturbations incorporate more semantic features of the target class.

Moreover, as depicted in Figure \ref{fig:visual}, the unclipped samples generated by our method are semantically very close to the target class. We also input these unclipped samples directly into the target classifier. Table \ref{pert_noclip} shows that these samples are highly likely to be classified as the target class. This confirms that semantic proximity to the target class effectively increases attack success rates.

These findings collectively suggest that our attack method's robustness and transferability result from embedding substantial target class semantics into the images, thereby reducing dependence on specific target model decision boundaries.

\section{More Visualization}

\begin{figure*}[h]
\vskip 0.2in
\begin{center}
\centerline{\includegraphics[width=0.8\linewidth]{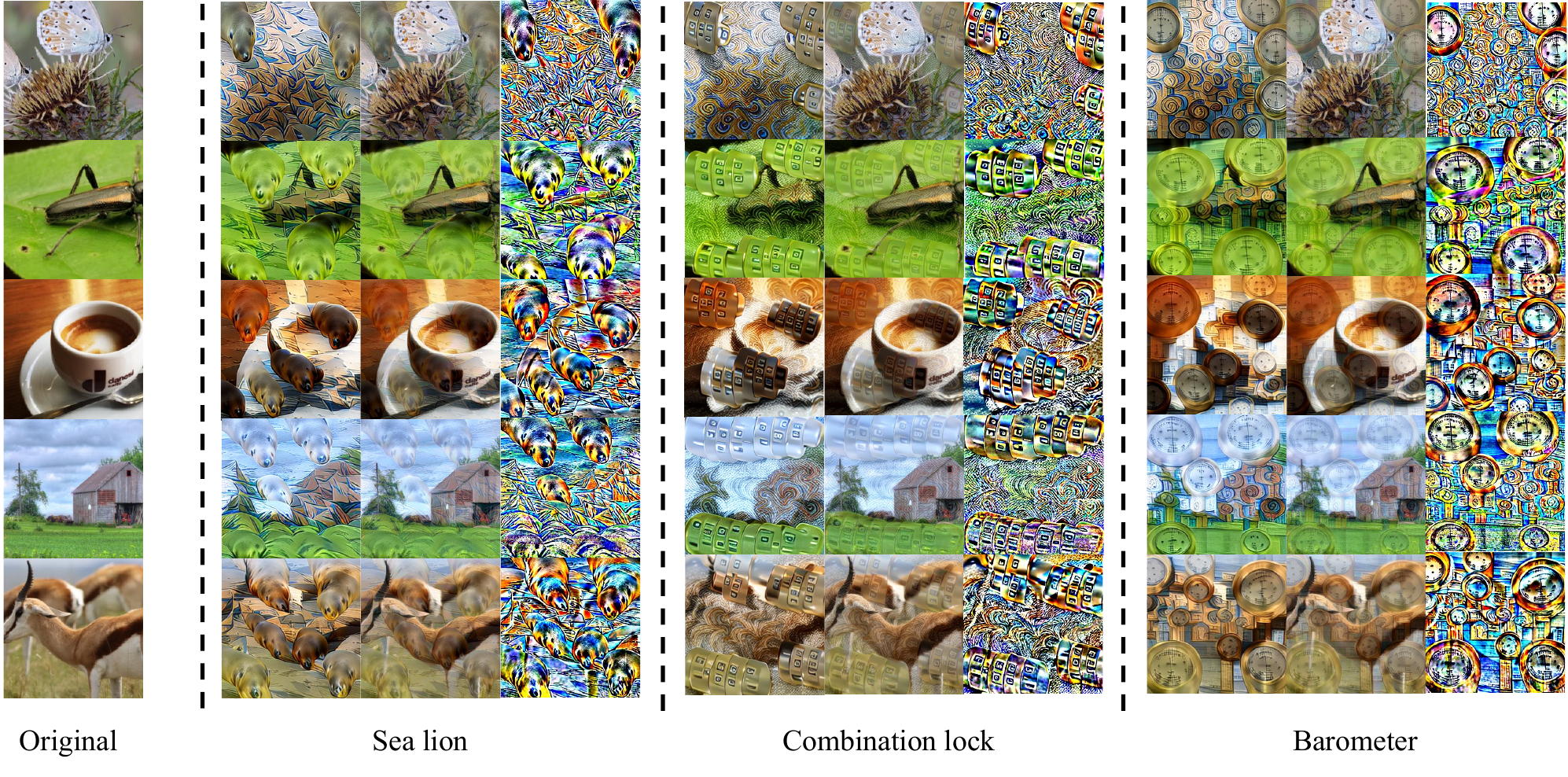}}
\caption{Visualization results of different input images targeting various classes. For each text prompt of the target class, the left column displays the adversarial examples generated before clipping, the middle column shows the adversarial examples after clipping, and the right column presents the corresponding adversarial perturbations, which represent the differences between the clipped adversarial examples and the original images. Note that the perturbations are scaled to a range between 0 and 1. The source model used is Inc-v3.}

\label{fig:more_visual}
\end{center}
\vskip -0.2in
\end{figure*}

\section{Limitations}

While our proposed Dual-Flow framework demonstrates strong adversarial attack performance and transferability, several limitations remain. First, the training process involves additional computational overhead compared to some simpler attack methods, which may affect scalability to extremely large datasets or models. Second, although we observe improved semantic consistency in generated perturbations, there is still room to enhance the interpretability and controllability of adversarial patterns in more complex scenarios. Lastly, our evaluations focus primarily on standard image classification benchmarks; extending the approach to other tasks or modalities warrants further investigation.

\section{Statistical Significance}
To rigorously evaluate the reliability of our attack success rates, we partitioned the test dataset into 5 disjoint subsets, each containing 200 images. We computed the attack success rate for each subset independently and derived the overall 95\% confidence intervals (CIs) for our method. These confidence intervals provide a quantitative measure of variability and statistical uncertainty. We further compared our results against the baseline method by examining the overlap of their respective confidence intervals. As shown in Table \ref{tab:significance}, the non-overlapping intervals observed in our experiments indicate that the improvement in attack success rate achieved by our method is statistically significant with high confidence.

\begin{table}[h]
  \caption{Attack success rates with 95\% confidence intervals over 5 data splits, compared to the baseline method. The source model is Res-152.}
  \label{tab:significance}%
  \centering
  \vskip 0.15in
  \begin{small}
    \resizebox{\linewidth}{!}{
    \begin{tabular}{ccccccc}
    \toprule
    Method   & Inc-v3   & Inc-v4   & Inc-Res-v2 & DN-121   & GoogleNet & VGG-16\\
    \midrule
     CGNC      & 53.39$\pm$2.77    & 51.53$\pm$2.52    & 34.24$\pm$1.72    & 85.66$\pm$1.53    & 62.23$\pm$2.20    & 63.36$\pm$2.95 \\
     Dual-Flow & 69.58$\pm$3.70   & 71.92$\pm$3.27    & 56.10$\pm$3.31    & 85.73$\pm$1.57    &73.65$\pm$2.83   & 67.59$\pm$2.28  \\
     \bottomrule
    \end{tabular}%
    }
  \end{small}
  \vskip -0.1in
\end{table}%

\section{Societal Impacts}
Our work contributes to a deeper understanding of adversarial vulnerabilities in deep learning models, which is essential for improving model robustness and security. By developing more effective attack methods, we provide valuable tools for evaluating and strengthening defenses against malicious exploitation. However, as with any adversarial technique, there is a potential risk of misuse in compromising AI systems. We advocate for responsible use of such methods strictly within research and security auditing contexts, and encourage the community to develop corresponding mitigation strategies to safeguard AI applications.

\end{document}